\theoremstyle{plain}
\newtheorem{theorem}{Theorem}[section]
\newtheorem{proposition}[theorem]{Proposition}
\theoremstyle{definition}
\theoremstyle{remark}
\icmltitlerunning{Conditional GANs with Auxiliary Discriminative Classifier}
\begin{document}

\twocolumn[
\icmltitle{Conditional GANs with Auxiliary Discriminative Classifier}



\icmlsetsymbol{equal}{*}

\begin{icmlauthorlist}
\icmlauthor{Liang Hou}{disrc,ucas}
\icmlauthor{Qi Cao}{disrc}
\icmlauthor{Huawei Shen}{disrc,ucas}
\icmlauthor{Siyuan Pan}{sjtu}
\icmlauthor{Xiaoshuang Li}{sjtu}
\icmlauthor{Xueqi Cheng}{lndst,ucas}
\end{icmlauthorlist}

\icmlaffiliation{disrc}{Data Intelligence System Research Center, Institute of Computing Technology, Chinese Academy of Sciences, Beijing, China}
\icmlaffiliation{ucas}{University of Chinese Academy of Sciences, Beijing, China}
\icmlaffiliation{sjtu}{Shanghai Jiao Tong University, Shanghai, China}
\icmlaffiliation{lndst}{CAS Key Laboratory of Network Data Science and Technology, Institute of Computing Technology, Chinese Academy of Sciences, Beijing, China}

\icmlcorrespondingauthor{Huawei Shen}{shenhuawei@ict.ac.cn}

\icmlkeywords{Conditional Generative Adversarial Networks, Discriminative Classifier, Machine Learning, ICML}

\vskip 0.3in
]



\printAffiliationsAndNotice{}  

\begin{abstract}
Conditional generative models aim to learn the underlying joint distribution of data and labels to achieve conditional data generation.
Among them, the auxiliary classifier generative adversarial network (AC-GAN) has been widely used, but suffers from the problem of low intra-class diversity of the generated samples.
The fundamental reason pointed out in this paper is that the classifier of AC-GAN is generator-agnostic, which therefore cannot provide informative guidance for the generator to approach the joint distribution, resulting in a minimization of the conditional entropy that decreases the intra-class diversity.
Motivated by this understanding, we propose a novel conditional GAN with an auxiliary discriminative classifier (ADC-GAN) to resolve the above problem.
Specifically, the proposed auxiliary discriminative classifier becomes generator-aware by recognizing the class-labels of the real data and the generated data discriminatively.
Our theoretical analysis reveals that the generator can faithfully learn the joint distribution even without the original discriminator, making the proposed ADC-GAN robust to the value of the coefficient hyperparameter and the selection of the GAN loss, and stable during training.
Extensive experimental results on synthetic and real-world datasets demonstrate the superiority of ADC-GAN in conditional generative modeling compared to state-of-the-art classifier-based and projection-based conditional GANs.
\end{abstract}

\section{Introduction}

Generative adversarial networks (GANs)~\cite{NIPS2014_5ca3e9b1} have achieved substantial progress in learning high-dimensional, complex data distribution such as images~\cite{brock2018large,Karras_2019_CVPR,Karras_2020_CVPR,NEURIPS2020_8d30aa96,karras2021aliasfree}.
Standard GANs consist of a generator network, which transfers latent codes sampled from tractable distributions such as Gaussian in the latent space to data points in the data space, and a discriminator network, which attempts to distinguish real data and generated data.
The generator is trained in an adversarial game against the discriminator so that it can learn the data distribution at the Nash equilibrium.
Remarkably, training GANs unconditionally is difficult to achieve equilibrium, making the generator prone to mode collapse~\cite{NIPS2016_8a3363ab,NEURIPS2018_288cc0ff,Chen_2019_CVPR}.
In addition, practitioners are interested in being able to control in advance the content of the generated samples~\cite{yan2015attribute2image,tan2020michigan} in practical applications.
A promising solution to these issues is conditioning the generator, leading to conditional GANs.

Conditional GANs (cGANs)~\cite{mirza2014conditional} is a family of variants of GANs that leverages the side information from annotated labels of samples to implement and train a conditional generator for conditional image generation from class-labels~\cite{pmlr-v70-odena17a,miyato2018cgans,brock2018large}.
To implement the conditional generator, the common technique nowadays injects the conditional information via conditional batch normalization~\cite{NIPS2017_6fab6e3a,hou2021slimmable}.
To train the conditional generator, a lot of effort put into effectively injecting the conditional information into the discriminator or auxiliary classifier that guides the conditional generator~\cite{odena2016semi,miyato2018cgans,zhou2018activation,Kavalerov_2021_WACV,NEURIPS2020_f490c742,zhou2020omni}.
Among them, the auxiliary classifier generative adversarial network (AC-GAN)~\cite{pmlr-v70-odena17a} has been widely used due to its simplicity and extensibility.
Specifically, AC-GAN utilizes an auxiliary classifier that first attempts to recognize the labels of data and then teaches the generator to produce label-consistent (classifiable) data.
However, it has been reported that AC-GAN suffers from the low intra-class diversity problem in the generated samples, especially on datasets with a large number of classes~\cite{pmlr-v70-odena17a,shu2017ac,NEURIPS2019_4ea06fbc}.

In this study, we point out that the fundamental reason for the low intra-class diversity problem of AC-GAN is that the classifier is agnostic to the generated data distribution and thus cannot provide informative guidance for the generator to learn the target distribution.
Motivated by this understanding, we propose a novel conditional GAN with an auxiliary discriminative classifier, namely ADC-GAN, to resolve the above problem by enabling the classifier to be aware of the generated data distribution as well as the real data distribution.
To this end, the discriminative classifier is trained to distinguish between the real and generated data while recognizing their class-labels.
The discriminative capability allows the classifier to provide the discrepancy between the real and generated data distributions like the discriminator, and the classification capability enables it to capture the dependencies between data and labels.
We show in theory that the generator of our proposed ADC-GAN can learn the joint data and label distribution under the optimal discriminative classifier even without the discriminator, making the method robust to the value of the coefficient hyperparameter and the selection of the GAN loss and stable during training.
We also highlight the superiority of ADC-GAN compared to the two most related works (TAC-GAN~\cite{NEURIPS2019_4ea06fbc} and PD-GAN~\cite{miyato2018cgans}) by analyzing their potential issues and limitations.
Results on synthetic data clearly show that the proposed ADC-GAN successfully resolves the problem of AC-GAN by faithfully recovering the joint distribution of real data and labels.
Extensive experiments based on two popular codebases demonstrate the effectiveness of the proposed ADC-GAN compared with state-of-the-art cGANs in conditional generative modeling.

\section{Preliminaries and Analysis}

\subsection{Generative Adversarial Networks}

Generative adversarial networks (GANs)~\cite{NIPS2014_5ca3e9b1} consist of two types of neural networks: the generator $G:\mathcal{Z}\to\mathcal{X}$ that maps a latent code $z\in\mathcal{Z}$ endowed with an easily sampled distribution $P_Z$ to a data point $x\in\mathcal{X}$, and the discriminator $D:\mathcal{X}\to[0,1]$ that distinguishes between real data that sampled from the real data distribution $P_X$ and fake data that sampled from the generated data distribution $Q_X=G_{\sharp}P_Z$ induced by the generator.
The goal of the generator is to confuse the discriminator by producing data that are as real as possible.
Formally, the objective functions for the discriminator and generator are defined as follows:
\begin{IEEEeqnarray}{rCl}\label{eq:gan}
\min_G\max_D V(G,D) & = & \mathbb{E}_{x\sim P_X}[\log D(x)] \nonumber \\ & + & \mathbb{E}_{x\sim Q_X}[\log (1-D(x))].
\end{IEEEeqnarray}
Theoretically, learning the generator under the optimal discriminator can be regarded as minimizing the Jensen-Shannon (JS) divergence between the real data distribution and the generated data distribution, i.e.,~$\min_G\mathrm{JS}(P_X\|Q_X)$.
This would enable the generator to restore the real data distribution at its optimum.
However, the training of GANs on complex natural images is typically unstable~\cite{che2016mode}, especially in the absence of supervision such as conditional information.
In addtition, the content of the images generated by GANs cannot be specified in advance.

\subsection{Base Method: AC-GAN}

Learning GANs with conditional information can not only improve the training stability but also achieve conditional generation.
As one of the most representative conditional GANs, AC-GAN~\cite{pmlr-v70-odena17a} utilizes an auxiliary classifier $C:\mathcal{X}\to\mathcal{Y}$ to learn the dependencies between data and labels endowed with a label prior $P_Y$ and then encourages the conditional generator $G:\mathcal{Z}\times\mathcal{Y}\to\mathcal{X}$ to generate as much classifiable data as possible.
The objective functions for the discriminator, the auxiliary classifier, and the generator of AC-GAN\footnote{We follow the common practice in the literature to adopt the stable version instead of the original one. We also provide an analysis of the original AC-GAN in \cref{sec:acgan_full}.} are defined as follows:
\begin{IEEEeqnarray}{rCl}\label{eq:acgan}
\max_{D,C} V(G,D) & + & \lambda\cdot\left(\mathbb{E}_{x,y\sim P_{X,Y}}[\log C(y|x)]\right), \\
\min_{G} V(G,D) & - & \lambda\cdot\left(\mathbb{E}_{x,y\sim Q_{X,Y}}[\log C(y|x)]\right),
\end{IEEEeqnarray}
where $\lambda>0$ is a coefficient hyperparameter, $P_{X,Y}$ indicates the joint distribution of real data and labels, and $Q_{X,Y}=G_{\sharp}(P_{Z}\times P_{Y})$ denotes the joint distribution of the generated data and labels induced by the conditional generator.

\begin{restatable}{proposition}{acganC}\label{pro:acgan-c}
For fixed generator, the optimal classifier of AC-GAN has the form of $C^*(y|x)=\frac{p(x,y)}{p(x)}$.
\end{restatable}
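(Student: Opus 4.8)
The objective for the classifier in \cref{eq:acgan} decouples cleanly: the term $V(G,D)$ does not depend on $C$, so maximizing over $C$ (for a fixed generator, hence a fixed $P_{X,Y}$) amounts to maximizing $\mathbb{E}_{x,y\sim P_{X,Y}}[\log C(y|x)]$ alone. The plan is to rewrite this expectation by disintegrating the joint distribution,
\begin{IEEEeqnarray*}{rCl}
\mathbb{E}_{x,y\sim P_{X,Y}}[\log C(y|x)] & = & \int_{\mathcal{X}} p(x)\sum_{y\in\mathcal{Y}} p(y|x)\log C(y|x)\,dx,
\end{IEEEeqnarray*}
so that the functional is a $p(x)$-weighted average of the inner quantities $\sum_{y} p(y|x)\log C(y|x)$, each of which depends only on the single probability vector $C(\cdot|x)$ on the simplex over $\mathcal{Y}$.

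Next I would solve the pointwise problem: for each fixed $x$, maximize $\sum_{y} p(y|x)\log C(y|x)$ subject to $\sum_{y} C(y|x)=1$ and $C(y|x)\ge 0$. This is a standard constrained concave maximization; introducing a Lagrange multiplier for the normalization constraint and setting the derivative with respect to $C(y|x)$ to zero yields $C(y|x)\propto p(y|x)$, and the normalization then forces $C^*(y|x)=p(y|x)=\frac{p(x,y)}{p(x)}$. Equivalently, one can invoke Gibbs' inequality: $\sum_y p(y|x)\log C(y|x) \le \sum_y p(y|x)\log p(y|x)$ with equality if and only if $C(\cdot|x)=p(\cdot|x)$. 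Since the weight $p(x)\ge 0$, maximizing the inner expression pointwise also maximizes the integral, giving the claimed global optimum.

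I do not expect a serious obstacle here; the only mild subtlety is justifying that pointwise maximization over $x$ yields the maximizer of the integrated functional (a routine measurable-selection / ``the integrand is maximized a.e.'' argument, valid wherever $p(x)>0$, and $C^*$ being otherwise unconstrained where $p(x)=0$). It is worth emphasizing in the write-up what the formula says: the optimal AC-GAN classifier reproduces the \emph{real} posterior $p(y|x)$ and is entirely independent of $Q_{X,Y}$ — precisely the generator-agnostic behavior that the paper identifies as the root cause of the low intra-class diversity, and that motivates the discriminative classifier introduced later.
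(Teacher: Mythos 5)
Your proposal is correct and follows essentially the same route as the paper: the paper rewrites the classifier's cross-entropy objective as $\mathbb{E}_{x\sim P_X}[H(p(y|x))+\mathrm{KL}(p(y|x)\|C(y|x))]$ and minimizes the KL term, which is exactly your Gibbs'-inequality / pointwise-maximization argument in different notation. The Lagrange-multiplier variant and the measurability remark are fine but add nothing beyond the paper's argument.
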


\begin{restatable}{theorem}{acganG}\label{thm:acgan-g}
Given the optimal classifier, at the equilibrium point, optimizing the classification task for the generator of AC-GAN is equivalent to:
\begin{equation}\label{eq:acgan-g}
\min_G \mathrm{KL}(Q_{X,Y}\|P_{X,Y})-\mathrm{KL}(Q_{X}\|P_{X})+H_Q(Y|X),
\end{equation}
where $H_Q(Y|X)=-\int \sum_{y} q(x,y)\log q(y|x)\mathrm{d}x$ is the conditional entropy of the generated samples.
\end{restatable}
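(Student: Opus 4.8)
The plan is to substitute the optimal classifier from \cref{pro:acgan-c} into the generator's classification term and then reorganize the resulting expectation into the three claimed pieces using only the chain rule for densities, $q(x,y)=q(x)q(y|x)$.

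First I would invoke \cref{pro:acgan-c}: the optimal classifier satisfies $C^*(y|x)=p(x,y)/p(x)=p(y|x)$, so the classification objective the generator minimizes becomes
\[
-\mathbb{E}_{x,y\sim Q_{X,Y}}\!\left[\log C^*(y|x)\right]=-\int\sum_{y}q(x,y)\log\frac{p(x,y)}{p(x)}\,\mathrm{d}x,
\]
where I implicitly assume the usual absolute-continuity conditions that already make the divergences in the statement well-defined. Next I would insert a vanishing term into the integrand: since $\log\frac{q(x,y)}{q(x)q(y|x)}=0$ pointwise on the support of $Q_{X,Y}$, one has
\[
-\log\frac{p(x,y)}{p(x)}=\log\frac{q(x,y)}{p(x,y)}-\log\frac{q(x)}{p(x)}-\log q(y|x).
\]
Taking $\int\sum_y q(x,y)(\,\cdot\,)\,\mathrm{d}x$ of the three terms on the right yields, in order, $\mathrm{KL}(Q_{X,Y}\|P_{X,Y})$; then $-\mathrm{KL}(Q_X\|P_X)$, after marginalizing $y$ via $\sum_y q(x,y)=q(x)$; and finally $H_Q(Y|X)$ by its definition. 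Summing these gives exactly \eqref{eq:acgan-g}.

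I do not expect a real obstacle: the identity is an exact equality between the generator's classification objective and the right-hand side, so the argument is essentially this one-line algebraic rearrangement. The only points requiring care are the bookkeeping of marginals in the middle term and the sign conventions in the definitions of KL divergence and conditional entropy. The qualifier ``at the equilibrium point'' in the statement serves only to justify replacing $C$ by the optimal $C^*$; no property of the discriminator's equilibrium is used in the derivation.
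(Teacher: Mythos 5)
Your proposal is correct and matches the paper's own argument: both substitute $C^*(y|x)=p(x,y)/p(x)$ from \cref{pro:acgan-c} into the generator's classification term and perform the same three-way logarithmic decomposition (joint ratio, marginal ratio, and $\log q(y|x)$ via the chain rule), yielding $\mathrm{KL}(Q_{X,Y}\|P_{X,Y})-\mathrm{KL}(Q_X\|P_X)+H_Q(Y|X)$ exactly as in the paper's proof.
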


The proofs of all theorems are referred to \cref{sec:proofs}.
Our \cref{thm:acgan-g} exposes two shortcomings of AC-GAN.
Firstly, maximization of the KL divergence between the marginal generator and data distributions ($\max_G \mathrm{KL}(Q_X\|P_X)$) contradicts the goal of conditional generative modeling that matches $Q_{X,Y}$ with $P_{X,Y}$.
Although this issue can be mitigated to some extent by the adversarial game between the discriminator and generator that minimizes the JS divergence between the two marginal distributions ($\min_G \mathrm{JS}(Q_X\|P_X)$), we find that it still has a negative impact on training stability and generation performance.
Secondly, minimization of the entropy of labels conditioned on data of the generated distribution ($\min_G H_Q(Y|X)$) will result in the label of the generated data being deterministic.
In other words, it forces the generated data for each class away from the classification hyperplane, explaining the low intra-class diversity of the generated samples in AC-GAN, especially when the distributions of different classes have non-negligible overlap, which occurs naturally as the fact that neither state-of-the-art classifiers nor human beings can achieve $100\%$ classification accuracy on real-world datasets~\cite{russakovsky2015imagenet}.
The original AC-GAN, whose classifier is trained from both real and generated samples, suffers from the same issue (cf. \cref{sec:acgan_full}).

\section{Proposed Method: ADC-GAN}

\begin{figure*}[tbp]
\begin{center}
\subfigure[PD-GAN]{
\label{fig:pdgan}
\includegraphics[height=0.21\textheight]{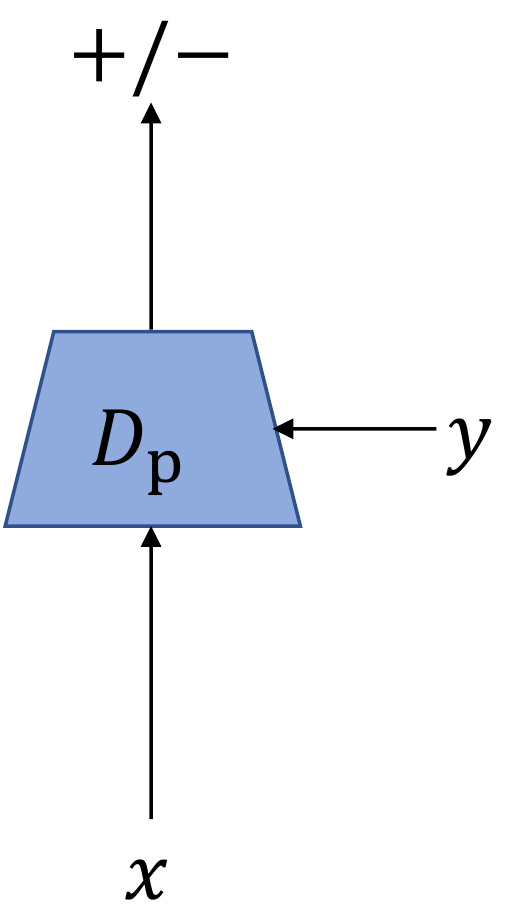}}
\subfigure[AC-GAN]{
\label{fid:acgan}
\includegraphics[height=0.21\textheight]{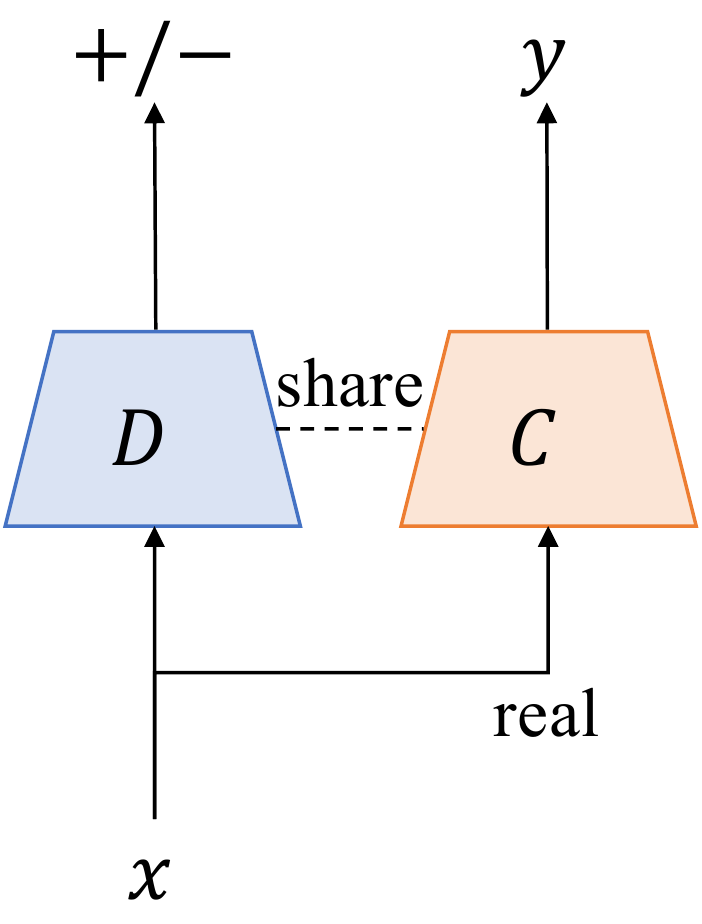}}
\subfigure[TAC-GAN]{
\label{fig:tacgan}
\includegraphics[height=0.21\textheight]{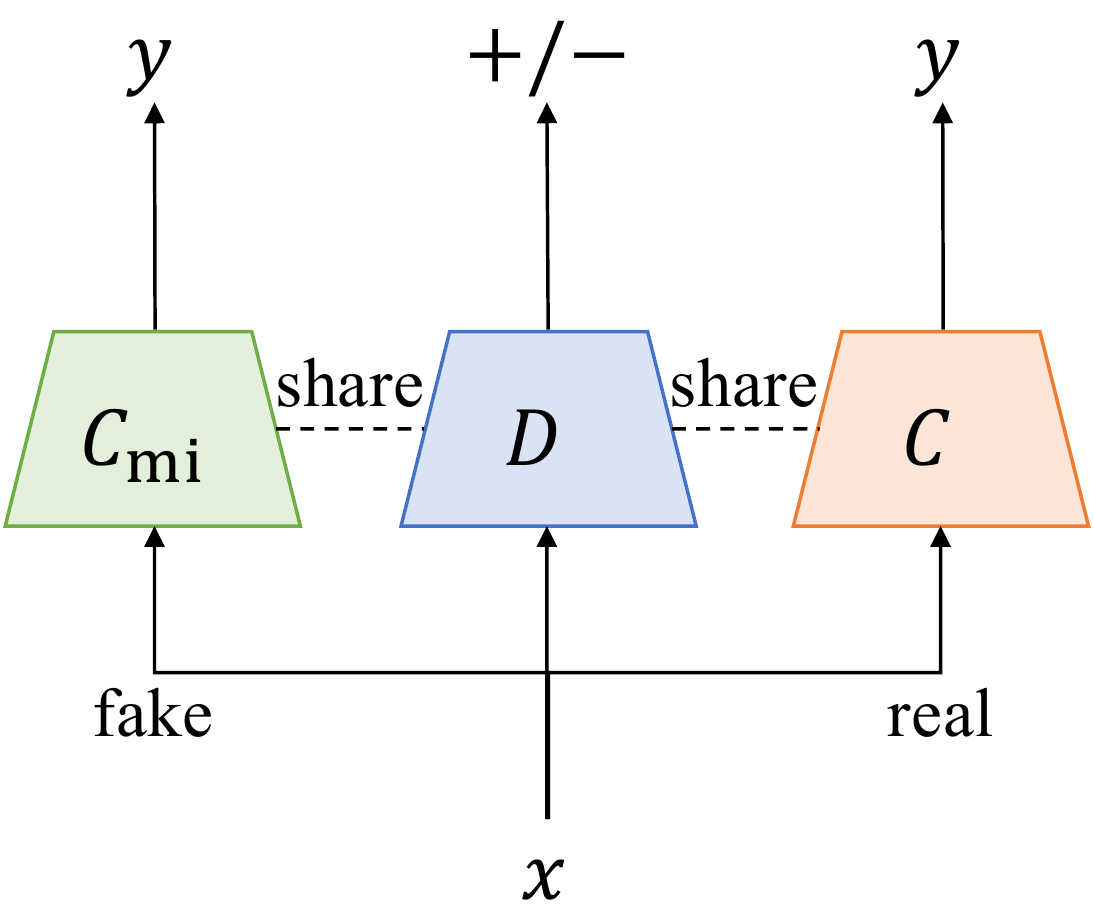}}
\subfigure[ADC-GAN]{
\label{fig:adcgan}
\includegraphics[height=0.21\textheight]{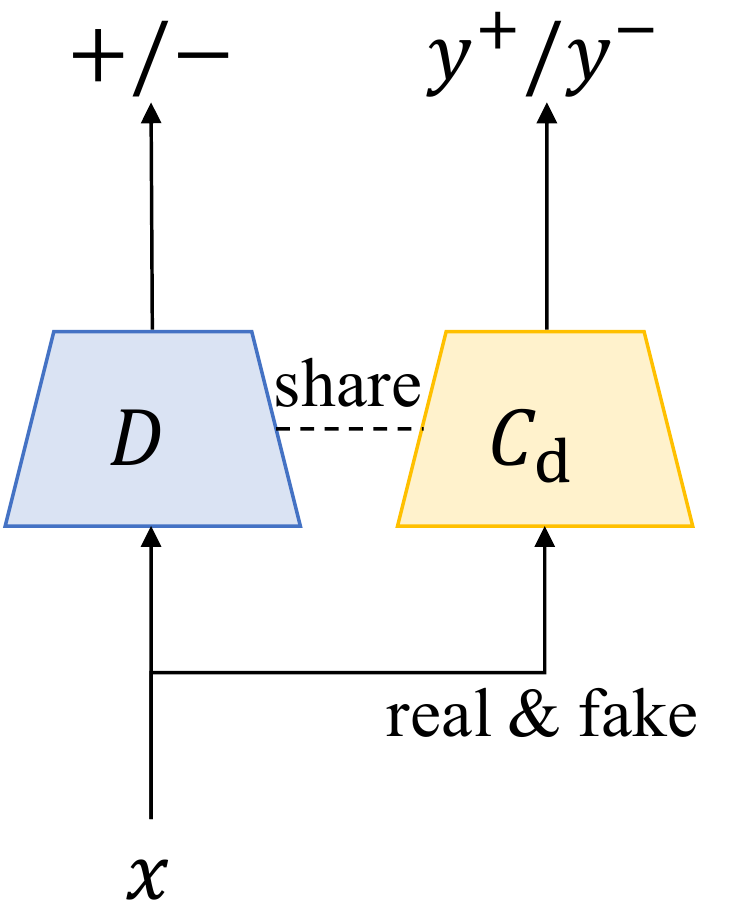}}
\caption{Illustration of discriminators/classifiers of existing cGANs (PD-GAN~\cite{miyato2018cgans}, AC-GAN~\cite{pmlr-v70-odena17a}, and TAC-GAN~\cite{NEURIPS2019_4ea06fbc}) and ADC-GAN. The symbol $+/-$ indicates the GAN labels (real or fake) and $y$ is the class-label of data $x$. ADC-GAN is different from PD-GAN by explicitly predicting the label and is different from AC-GAN and TAC-GAN in that the classifier $C_\mathrm{d}$ also distinguishes real from generated, like the discriminator.}
\label{fig:cgans}
\end{center}
\end{figure*}

The goal of conditional generative modeling is to faithfully learn the joint distribution of real data and labels regardless of the shape of the joint distribution (whether there is overlap between the distributions of different classes).
We first note that the reason why AC-GAN fails to learn the target joint distribution (\cref{thm:acgan-g}) originates from that the optimal classifier $C^*(y|x)=\frac{p(x,y)}{p(x)}$ (\cref{pro:acgan-c}) is agnostic to the density of the generated (marginal or joint) distribution ($q(x)$ or $q(x,y)$).
As a result, the classifier cannot provide the discrepancy between the target distribution and the generated distribution, resulting in a biased learning objective of the generator.
Recall that the optimal discriminator $D^*(x)=\frac{p(x)}{p(x)+q(x)}$ is aware of the real data distribution as well as the generated data distribution~\cite{NIPS2014_5ca3e9b1}, and can therefore provide the discrepancy between the real and generated data distributions $\frac{p(x)}{q(x)}=\frac{D^*(x)}{1-D^*(x)}$ for faithful generative modeling of the generator.
Intuitively, the distribution-aware ability on both real and generated data is caused by the fact that the discriminator distinguishes between the real and generated data with different labels (real or fake).
Motivated by this understanding, we propose to make the classifier capable of classifying the the real and generated data with different class-labels, establishing a discriminative classifier $C_\mathrm{d}:\mathcal{X}\to\mathcal{Y}^+\cup\mathcal{Y}^-$ ($\mathcal{Y}^+$ for real data and $\mathcal{Y}^-$ for generated data) that recognizes the label of the real and generated samples discriminatively.
The generator is encouraged to produce classifiable real data rather than classifiable fake data.
Mathematically, the objective functions for the discriminator, the discriminative classifier, and the generator of ADC-GAN are defined as:
\begin{IEEEeqnarray}{rCl}\label{eq:adcgan}
\max_{D,C_\mathrm{d}} V(G,D) + \lambda\cdot(\mathbb{E}_{x,y\sim P_{X,Y}}&[&\log C_\mathrm{d}(y^+|x)] \nonumber\\+\mathbb{E}_{x,y\sim Q_{X,Y}}&[&\log C_\mathrm{d}(y^-|x)]), \\
\min_{G} V(G,D) - \lambda\cdot(\mathbb{E}_{x,y\sim Q_{X,Y}}&[&\log C_\mathrm{d}(y^+|x)] \nonumber\\- \mathbb{E}_{x,y\sim Q_{X,Y}}&[&\log C_\mathrm{d}(y^-|x)]),
\end{IEEEeqnarray}
where $C_\mathrm{d}(y^+|x)=\frac{\exp(\varphi^+(y) \cdot \phi(x))}{\sum_{\bar{y}}\exp(\varphi^+(\bar{y}) \cdot \phi(x))+\sum_{\bar{y}}\exp(\varphi^-(\bar{y}) \cdot \phi(x))}$ (resp. $C_\mathrm{d}(y^-|x)=\frac{\exp(\varphi^-(y) \cdot \phi(x))}{\sum_{\bar{y}}\exp(\varphi^+(\bar{y}) \cdot \phi(x))+\sum_{\bar{y}}\exp(\varphi^-(\bar{y}) \cdot \phi(x))}$) indicates the probability that a data $x$ is classified as the label $y$ and real (resp. fake) simultaneously by the discriminative classifier.
Here, $\phi:\mathcal{X}\to\mathbb{R}^d$ is a feature extractor that is shared with the original discriminator in our implementation ($D=\sigma\circ\psi\circ\phi$ with a linear mapping $\psi:\mathbb{R}^d\to\mathbb{R}$ and a sigmoid function $\sigma:\mathbb{R}\to[0,1]$), and $\varphi^+:\mathcal{Y}\to\mathbb{R}^d$ and $\varphi^-:\mathcal{Y}\to\mathbb{R}^d$ capture learnable embeddings of labels responsible to the real and generated data, respectively.

\begin{table*}[tbp]
\caption{Theoretical learning objective for the generator of competing methods under the optimal discriminator and classifier.}
\label{tbl:objectives}
\begin{center}
\begin{sc}
\begin{tabular}{cl}
\toprule
Method & Theoretical Learning Objective for the Generator \\
\midrule
AC-GAN~\cite{pmlr-v70-odena17a} & $\min_G \mathrm{JS}(P_X\|Q_X)+\lambda(\mathrm{KL}(Q_{X,Y}\|P_{X,Y})-\mathrm{KL}(Q_X\|P_X)+H_Q(Y|X))$ \\
TAC-GAN~\cite{NEURIPS2019_4ea06fbc} & $\min_G \mathrm{JS}(P_X\|Q_X)+\lambda(\mathrm{KL}(Q_{X,Y}\|P_{X,Y})-\mathrm{KL}(Q_X\|P_X))$ \\
ADC-GAN (ours) & $\min_G \mathrm{JS}(P_X\|Q_X)+\lambda(\mathrm{KL}(Q_{X,Y}\|P_{X,Y}))$ \\
PD-GAN~\cite{miyato2018cgans} & $\min_G \mathrm{JS}(Q_{X,Y}\|P_{X,Y})$ \\
\bottomrule
\end{tabular}
\end{sc}
\end{center}
\end{table*}

At the first glance, the objective function with the discriminative classifier for the generator seems to be redundant as maximization of
$\log C_\mathrm{d}(y^+|x)$ implicitly contains the goal of minimization of $\log C_\mathrm{d}(y^-|x)$.
However, we show below that the second term is indispensable for accurately learning the real joint data-label distribution.
Arguably, maximization of
$\log C_\mathrm{d}(y^+|x)$ forces the generator to produce only few label-consistent data, facilitating the fidelity but losing the diversity of the generated samples.
On the other hand, minimization of $\log C_\mathrm{d}(y^-|x)$ encourages the generator to not synthesis the typically label-consistent data, increasing the diversity but may degrade the fidelity of the generated samples.
In general, the two objectives together assist the generator in achieving its goal as we proved below.

\begin{restatable}{proposition}{adcganC}\label{pro:adcgan-c}
For fixed generator, the optimal discriminative classifier of ADC-GAN has the form of the following:
\begin{equation*}\label{eq:adcgan-c}
C_\mathrm{d}^*(y^+|x)=\frac{p(x,y)}{p(x)+q(x)}, C_\mathrm{d}^*(y^-|x)=\frac{q(x,y)}{p(x)+q(x)}.
\end{equation*}
\end{restatable}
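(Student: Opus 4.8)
The plan is to follow the classical pointwise-optimization argument that is used to derive the optimal discriminator in vanilla GANs. First I would rewrite the classifier's part of the objective in \cref{eq:adcgan} as a single integral over $x$: since $\mathbb{E}_{x,y\sim P_{X,Y}}[\log C_\mathrm{d}(y^+|x)]=\int\sum_y p(x,y)\log C_\mathrm{d}(y^+|x)\,\mathrm{d}x$ and likewise $\mathbb{E}_{x,y\sim Q_{X,Y}}[\log C_\mathrm{d}(y^-|x)]=\int\sum_y q(x,y)\log C_\mathrm{d}(y^-|x)\,\mathrm{d}x$, the quantity to maximize over $C_\mathrm{d}$ becomes $\int\big(\sum_y p(x,y)\log C_\mathrm{d}(y^+|x)+\sum_y q(x,y)\log C_\mathrm{d}(y^-|x)\big)\,\mathrm{d}x$. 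Under the usual non-parametric capacity assumption — i.e. $\phi$, $\varphi^+$, $\varphi^-$ are rich enough that $x\mapsto\big(C_\mathrm{d}(y^+|x),C_\mathrm{d}(y^-|x)\big)_{y}$ can be an arbitrary measurable map into the $2|\mathcal{Y}|$-dimensional probability simplex, and the values at distinct $x$ are mutually unconstrained — the integral is maximized by maximizing the integrand separately for (almost) every $x$ with $p(x)+q(x)>0$.

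Next, for such a fixed $x$, I would solve the finite-dimensional problem: maximize $f(c)=\sum_y p(x,y)\log c^+_y+\sum_y q(x,y)\log c^-_y$ over the simplex $\{\,c^+_y,c^-_y\ge 0:\ \sum_y c^+_y+\sum_y c^-_y=1\,\}$. This is exactly an expression of the form $\sum_i w_i\log c_i$ with the $2|\mathcal{Y}|$ nonnegative weights $w_i\in\{p(x,y)\}_y\cup\{q(x,y)\}_y$ summing to $W:=\sum_y p(x,y)+\sum_y q(x,y)=p(x)+q(x)$. A Lagrange-multiplier computation — or, more cleanly, Gibbs' inequality $\sum_i w_i\log c_i\le\sum_i w_i\log(w_i/W)$ with equality iff $c_i=w_i/W$ — gives the unique maximizer $c_i=w_i/W$. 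Substituting back yields $C_\mathrm{d}^*(y^+|x)=p(x,y)/(p(x)+q(x))$ and $C_\mathrm{d}^*(y^-|x)=q(x,y)/(p(x)+q(x))$, which is the claimed form.

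The argument is essentially routine, so the only points that need care are (i) justifying the pointwise reduction, which rests squarely on the capacity assumption and on the fact that the objective couples $C_\mathrm{d}(\cdot|x)$ across different $x$ only through integration; and (ii) the boundary behaviour — on $\{x:p(x)+q(x)=0\}$ the integrand is irrelevant (this set has measure zero under both $P_X$ and $Q_X$), and any coordinate with zero weight $w_i$ contributes $0\cdot\log c_i$ and hence imposes no constraint, so the optimal value of those coordinates may be taken to be $0$, consistently with the stated formula. I do not expect a genuine obstacle here; the main thing is to state the Gibbs-inequality step so that uniqueness of the optimum is transparent.
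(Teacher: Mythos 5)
Your proof is correct and takes essentially the same route as the paper: the paper simply packages the two expectations into a single mixture distribution $P^m$ over the doubled label set $\mathcal{Y}^+\cup\mathcal{Y}^-$ (with $p^m(x,y^+)=\tfrac12 p(x,y)$, $p^m(x,y^-)=\tfrac12 q(x,y)$) and applies the cross-entropy $=$ entropy $+$ KL decomposition, which is exactly your per-$x$ Gibbs-inequality step integrated over $x$. Your explicit pointwise reduction and boundary remarks are fine additions but do not change the substance.
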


\cref{pro:adcgan-c} shows that the optimal discriminative classifier is aware of the densities of the real and generated joint distributions, therefore it is able to provide the discrepancy $\frac{p(x,y)}{q(x,y)}=\frac{C_\mathrm{d}^*(y^+|x)}{C_\mathrm{d}^*(y^-|x)}$ to optimize the generator.

\begin{restatable}{theorem}{adcganG}\label{thm:adcgan-g}
Given the optimal discriminative classifier, at the equilibrium point, optimizing the classification task for the generator of ADC-GAN is equivalent to:
\begin{equation}\label{eq:adcgan-g}
\min_G \mathrm{KL}(Q_{X,Y}\|P_{X,Y}).
\end{equation}
\end{restatable}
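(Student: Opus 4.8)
\emph{Proof proposal.} I would mirror the two-stage argument behind \cref{thm:acgan-g}: substitute the closed form of the optimal discriminative classifier into the generator's classification objective, then simplify. Dropping the GAN term $V(G,D)$ and the constant $\lambda$, the classification part of the generator objective in \cref{eq:adcgan} asks to minimize
\begin{equation*}
\mathcal{L}(G) \;=\; -\,\mathbb{E}_{x,y\sim Q_{X,Y}}\bigl[\log C_\mathrm{d}(y^+|x)\bigr] \;+\; \mathbb{E}_{x,y\sim Q_{X,Y}}\bigl[\log C_\mathrm{d}(y^-|x)\bigr].
\end{equation*}
Since the inner maximization over the discriminative classifier is attained at $C_\mathrm{d}^*$ from \cref{pro:adcgan-c}, I would evaluate $\mathcal{L}(G)$ at $C_\mathrm{d}=C_\mathrm{d}^*$, using $C_\mathrm{d}^*(y^+|x)=p(x,y)/(p(x)+q(x))$ and $C_\mathrm{d}^*(y^-|x)=q(x,y)/(p(x)+q(x))$.

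The crucial observation is that both substituted logarithms carry the \emph{same} normalizer $p(x)+q(x)$, so it cancels inside the difference:
\begin{equation*}
\log C_\mathrm{d}^*(y^-|x) - \log C_\mathrm{d}^*(y^+|x) \;=\; \log\frac{q(x,y)}{p(x)+q(x)} - \log\frac{p(x,y)}{p(x)+q(x)} \;=\; \log\frac{q(x,y)}{p(x,y)} .
\end{equation*}
Taking the expectation over $(x,y)\sim Q_{X,Y}$ then yields $\mathcal{L}(G)=\int\sum_{y} q(x,y)\log\frac{q(x,y)}{p(x,y)}\,\mathrm{d}x=\mathrm{KL}(Q_{X,Y}\|P_{X,Y})$, which is exactly \cref{eq:adcgan-g}. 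Adding back the $\mathrm{JS}(P_X\|Q_X)$ term produced by the optimal original discriminator recovers the full generator objective reported in \cref{tbl:objectives}; since $\mathrm{KL}(Q_{X,Y}\|P_{X,Y})\ge 0$ with equality iff $Q_{X,Y}=P_{X,Y}$, this pins down the equilibrium referenced in the statement, at which the generator has faithfully learned the joint distribution.

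The step I expect to need the most care is the legitimacy of plugging the $G$-dependent optimum $C_\mathrm{d}^*$ back into the generator's loss, together with the accompanying support conditions: one should note that $p(x)+q(x)>0$ wherever $q(x)>0$, so $C_\mathrm{d}^*$ is well defined on the support of $Q_{X,Y}$ and the ratio $q(x,y)/p(x,y)$ that appears is the density making $\mathrm{KL}(Q_{X,Y}\|P_{X,Y})$ meaningful (finite precisely when $Q_{X,Y}\ll P_{X,Y}$). Everything else is a one-line computation. Conceptually, the reason ADC-GAN avoids the spurious $-\mathrm{KL}(Q_X\|P_X)$ and $H_Q(Y|X)$ terms that appear in \cref{thm:acgan-g} is exactly that $C_\mathrm{d}^*(y^+|x)/C_\mathrm{d}^*(y^-|x)=p(x,y)/q(x,y)$ is \emph{generator-aware}, so the density ratio the generator chases is already the correct one, and the normalizer cancellation leaves nothing behind.
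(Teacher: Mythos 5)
Your proposal is correct and follows essentially the same route as the paper's proof: substitute the optimal discriminative classifier from \cref{pro:adcgan-c} into the generator's classification objective, observe that the common normalizer $p(x)+q(x)$ cancels in the difference of the two log terms, and recognize the remaining expectation $\mathbb{E}_{x,y\sim Q_{X,Y}}[\log\frac{q(x,y)}{p(x,y)}]$ as $\mathrm{KL}(Q_{X,Y}\|P_{X,Y})$. Your added remarks on support conditions and the combined objective with the discriminator are consistent with the paper's discussion but not part of its (three-line) proof.
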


\cref{thm:adcgan-g} confirms that the discriminative classifier itself can guarantee the generator to restore the real joint distribution at the optimum.
In practice, we retain the discriminator to train the generator for better training stability and convergence.
The overall learning objective for the generator under the optimal discriminator and discriminative classfier is to minimize the JS divergence between the marginal data distributions and the reversed KL divergence bewteen the joint data-label distributions ($\min_G\mathrm{JS}(P_X\|Q_X) + \lambda\cdot\mathrm{KL}(Q_{X,Y}\|P_{X,Y})$).
Since the optimal solution set for generative modeling contains the optimal solution set for conditional generative modeling ($\arg\min_{G}\mathrm{JS}(P_X\|Q_X) \supseteq \arg\min_{G}\mathrm{KL}(Q_{X,Y}\|P_{X,Y})$), the guidance to the generator provided by discriminator and discriminative classifier are harmonious, which makes ADC-GAN robust to the value of the hyperparameter $\lambda$ and the selection of the GAN loss $V(G,D)$.

\section{Analysis on Competing Methods}

In this section, we analyze the drawbacks of the two competing methods, TAC-GAN~\cite{NEURIPS2019_4ea06fbc} and PD-GAN~\cite{miyato2018cgans}, to show the superiority of ADC-GAN.
We also analyze AM-GAN~\cite{zhou2018activation} in \cref{sec:amgan}.
Before diving into the details, we show diagrams of the discriminator and classifier of these methods in \cref{fig:cgans} and summarize the theoretical learning objective for the generator under the optimal discriminator and classifier of these methods in \cref{tbl:objectives} for an overview.

\subsection{Competing Method: TAC-GAN}\label{sec:tacgan}

TAC-GAN~\cite{NEURIPS2019_4ea06fbc} addresses the low intra-class diversity problem of AC-GAN by eliminating the conditional entropy of the generated data distribution $H_Q(Y|X)$ by learning the generator with another classifier $C_\mathrm{mi}:\mathcal{X}\to\mathcal{Y}$, which is trained with the generated samples.
The objective functions for the discriminator, the twin classifiers, and the generator of TAC-GAN are defined as follows:
\begin{IEEEeqnarray}{rCl}\label{eq:tacgan}
\max_{D,C,C_\mathrm{mi}} V(G,D) + \lambda\cdot(\mathbb{E}_{x,y\sim P_{X,Y}}&[&\log C(y|x)] \nonumber \\ + \mathbb{E}_{x,y\sim Q_{X,Y}}&[&\log C_\mathrm{mi}(y|x)]), \\
\min_{G} V(G,D) - \lambda\cdot(\mathbb{E}_{x,y\sim Q_{X,Y}}&[&\log C(y|x)] \nonumber \\ - \mathbb{E}_{x,y\sim Q_{X,Y}}&[&\log C_\mathrm{mi}(y|x)]).
\end{IEEEeqnarray}

\begin{restatable}{theorem}{tacganG}\label{thm:tacgan-g}
Given the twin optimal classifiers, at the equilibrium point, optimizing the classification tasks for the generator of TAC-GAN is equivalent to:
\begin{equation}\label{eq:tacgan-g}
\min_G \mathrm{KL}(Q_{X,Y}\|P_{X,Y}) - \mathrm{KL}(Q_X\|P_X).
\end{equation}
\end{restatable}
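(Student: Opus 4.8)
The plan is to reuse the same two-step template as for AC-GAN: first pin down the two optimal classifiers, then substitute them into the generator's classification objective and simplify the resulting integral. \emph{Step one, the optimal twin classifiers.} Since $C$ is trained only on real pairs through $\mathbb{E}_{x,y\sim P_{X,Y}}[\log C(y|x)]$, \cref{pro:acgan-c} already gives $C^*(y|x)=\frac{p(x,y)}{p(x)}=p(y|x)$. The companion classifier $C_\mathrm{mi}$ has the same functional form but is trained on generated pairs, $\mathbb{E}_{x,y\sim Q_{X,Y}}[\log C_\mathrm{mi}(y|x)]$; pointwise in $x$ this maximizes $\sum_y q(x,y)\log C_\mathrm{mi}(y|x)$ over the probability simplex $\{C_\mathrm{mi}(\cdot|x):\sum_y C_\mathrm{mi}(y|x)=1\}$, so by the Gibbs inequality (equivalently a Lagrange multiplier) the maximizer is $C_\mathrm{mi}^*(y|x)=\frac{q(x,y)}{q(x)}=q(y|x)$ — this is exactly the argument behind \cref{pro:acgan-c} with $P$ replaced by $Q$.

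\emph{Step two, substitution and simplification.} Plugging $C^*$ and $C_\mathrm{mi}^*$ into the generator term and dropping the positive factor $\lambda$, the generator minimizes $\mathbb{E}_{x,y\sim Q_{X,Y}}[\log C_\mathrm{mi}^*(y|x)]-\mathbb{E}_{x,y\sim Q_{X,Y}}[\log C^*(y|x)]=\mathbb{E}_{x,y\sim Q_{X,Y}}\!\left[\log\frac{q(y|x)}{p(y|x)}\right]$ (the minus sign in front of $\lambda$ in the generator's objective swaps the two terms). The key algebraic move is the factorization $\frac{q(y|x)}{p(y|x)}=\frac{q(x,y)}{p(x,y)}\cdot\frac{p(x)}{q(x)}$, which splits the expectation into $\int\sum_y q(x,y)\log\frac{q(x,y)}{p(x,y)}\,\mathrm{d}x+\int\sum_y q(x,y)\log\frac{p(x)}{q(x)}\,\mathrm{d}x$. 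The first term is $\mathrm{KL}(Q_{X,Y}\|P_{X,Y})$ by definition; in the second, summing $q(x,y)$ over $y$ marginalizes to $q(x)$, giving $\int q(x)\log\frac{p(x)}{q(x)}\,\mathrm{d}x=-\mathrm{KL}(Q_X\|P_X)$. Hence the generator's classification objective equals $\mathrm{KL}(Q_{X,Y}\|P_{X,Y})-\mathrm{KL}(Q_X\|P_X)$, as claimed.

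I do not expect a genuine obstacle: the computation is short and parallels \cref{thm:acgan-g}, with the cancellation of the conditional-entropy term $H_Q(Y|X)$ being the whole point — it is precisely what the auxiliary classifier $C_\mathrm{mi}$ was introduced to remove, yet the reversed marginal KL term $-\mathrm{KL}(Q_X\|P_X)$ survives. The only points needing care are the sign bookkeeping just noted and making the $C_\mathrm{mi}$-optimality argument rigorous on the set where $q(x)$ (or $q(x,y)$) vanishes, handled exactly as in \cref{pro:acgan-c}; the phrase ``at the equilibrium point'' simply indicates that we evaluate at the optimal twin classifiers while treating the adversarial term $V(G,D)$ separately.
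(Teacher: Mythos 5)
Your proposal is correct and follows essentially the same route as the paper: it derives the twin optimal classifiers $C^*(y|x)=p(y|x)$ and $C_\mathrm{mi}^*(y|x)=q(y|x)$ exactly as in \cref{pro:tacgan-c} (the paper likewise reduces this to \cref{pro:acgan-c} applied to $P$ and $Q$ separately), and then performs the same substitution and log-ratio factorization $\frac{q(y|x)}{p(y|x)}=\frac{q(x,y)}{p(x,y)}\cdot\frac{p(x)}{q(x)}$ to obtain $\mathrm{KL}(Q_{X,Y}\|P_{X,Y})-\mathrm{KL}(Q_X\|P_X)$. The sign bookkeeping and the marginalization over $y$ match the paper's derivation, so there is nothing to add.
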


Our \cref{thm:tacgan-g} reveals that the learning objective of the generator of TAC-GAN, under the twin optimal classifiers, can be regarded as optimizing contradictory divergences, i.e.,~minimization between joint distributions but maximization between marginal distributions.
Although theoretically the JS divergence or others~\cite{NIPS2016_cedebb6e,pmlr-v70-arjovsky17a} introduced through the adversarial training between the discriminator and generator may remedy this issue, it is difficult to obtain the optimal discriminator and classifier in the practical optimization to ensure the elimination of the contradiction.
We argue that the training instability of TAC-GAN reported in the literature~\cite{kocaoglu2018causalgan,han2020unbiased} and found in our experiments (cf. \cref{fig:c100_fid_curve,fig:fid_curve}) can be explained by this analysis.

\subsection{Competing Method: PD-GAN}\label{sec:pdgan}

PD-GAN~\cite{miyato2018cgans} injects the conditional information into the projection discriminator $D_\mathrm{p}:\mathcal{X}\times\mathcal{Y}\to[0,1]$ via the inner-product between the embedding of the label and the representation of the data to calculate the joint discriminative score of the data-label pair.
In such a way, PD-GAN inherits the property of convergence point similar to the standard GAN such that it can avoid the low intra-class diversity problem of AC-GAN ideally.
Specifically, the objective functions for the projection discriminator and the generator of PD-GAN are defined as follows:
\begin{IEEEeqnarray}{rCl}\label{eq:pdgan}
\min_G\max_{D_\mathrm{p}} V(G,D_\mathrm{p}) &=& \mathbb{E}_{x,y\sim P_{X,Y}}[\log D_\mathrm{p}(x,y)] \nonumber \\ &+& \mathbb{E}_{x,y\sim Q_{X,Y}}[\log (1-D_\mathrm{p}(x,y))].\IEEEeqnarraynumspace
\end{IEEEeqnarray}
Based on this formulation, the optimal projection discriminator has the following form:
\begin{IEEEeqnarray}{rCl}
D_\mathrm{p}^*(x,y)&=&\frac{1}{1+\exp(-d^*(x,y))}=\frac{p(x,y)}{p(x,y)+q(x,y)} \nonumber\\
\Rightarrow  d^*(x,y)&=&\log\frac{p(x,y)}{q(x,y)}=\log\frac{p(x)}{q(x)}+\log\frac{p(y|x)}{q(y|x)},
\end{IEEEeqnarray}
where $p(y|x)=\frac{\exp(\varphi^+(y) \cdot \phi(x))}{\sum_{\bar{y}} \exp(\varphi^+(\bar{y}) \cdot \phi(x))}$ and $q(y|x)=\frac{\exp(\varphi^-(y) \cdot \phi(x))}{\sum_{\bar{y}} \exp(\varphi^-(\bar{y}) \cdot \phi(x))}$.
And PD-GAN accordingly defines:
\begin{IEEEeqnarray}{rCl}\label{eq:pd-gan}
&&r(x):=\log\frac{p(x)}{q(x)}:=\psi(\phi(x)), \nonumber\\
&&r(y|x):=\log\frac{p(y|x)}{q(y|x)}:=\underbrace{(\overbrace{\varphi^+(y)-\varphi^-(y)}^{\varphi(y)})\cdot \phi(x)}_{\hat{r}(y|x)} - \\ &&\underbrace{\log\sum_{\bar{y}\in\mathcal{Y}}\exp\left(\varphi^+(\bar{y}) \cdot \phi(x)\right)+\log\sum_{\bar{y}\in\mathcal{Y}}\exp\left(\varphi^-(\bar{y}) \cdot \phi(x)\right)}_{\textcircled{a}}.\nonumber
\end{IEEEeqnarray}

However, PD-GAN actually ignores the partition term $\textcircled{a}$\footnote{PD-GAN discards $\textcircled{a}$ in implementing the projection discriminator based on the hypothesis that $\textcircled{a}$ can be merged into $r(x)$. However, $r(x)$ does not model any label information, which should be involved by $\textcircled{a}$. Therefore, it is unreasonable to do this.
} in Equation~\ref{eq:pd-gan} and heuristically constructs the logit of the projection discriminator in the form of:
\begin{equation}
d(x,y)=r(x)+\hat{r}(y|x)=\psi(\phi(x))+\varphi(y)\cdot\phi(x).
\end{equation}
Discarding the partition term would make PD-GAN no longer belong to probability models that are able to model the conditional probabilities $p(y|x)$ and $q(y|x)$, resulting in losing the complete dependencies between data and labels.
Particularly, for mismatched data-label pair $(x,y)$ with probabilities of $p(x,y)=0$ and $q(x,y)=0$, the projection discriminator $D_\mathrm{p}^*(x,y)=\frac{p(x,y)}{p(x,y)+q(x,y)}=\frac{0}{0}$ is undefined and thus unreliable.
Our ADC-GAN can penalize the mismatched data-label pair because $C_\mathrm{d}^*(y^+|x)=\frac{p(x,y)}{p(x)+q(x)}=\frac{0}{>0}=0$ ($p(x)+q(x)>0$ for valid data $x$).
Moreover, the optimal projection discriminator constructed according to the minimax GAN lacks theoretical guarantees on other GAN loss functions.
The proposed ADC-GAN can be flexibly applied to any version of the GAN loss as we do not require a specific form of the discriminator.

\section{Experiments}

\begin{figure*}[t]
\begin{center}
\subfigure[Real Data]{\label{fig:synthetic_data}
\includegraphics[width=0.22\textwidth]{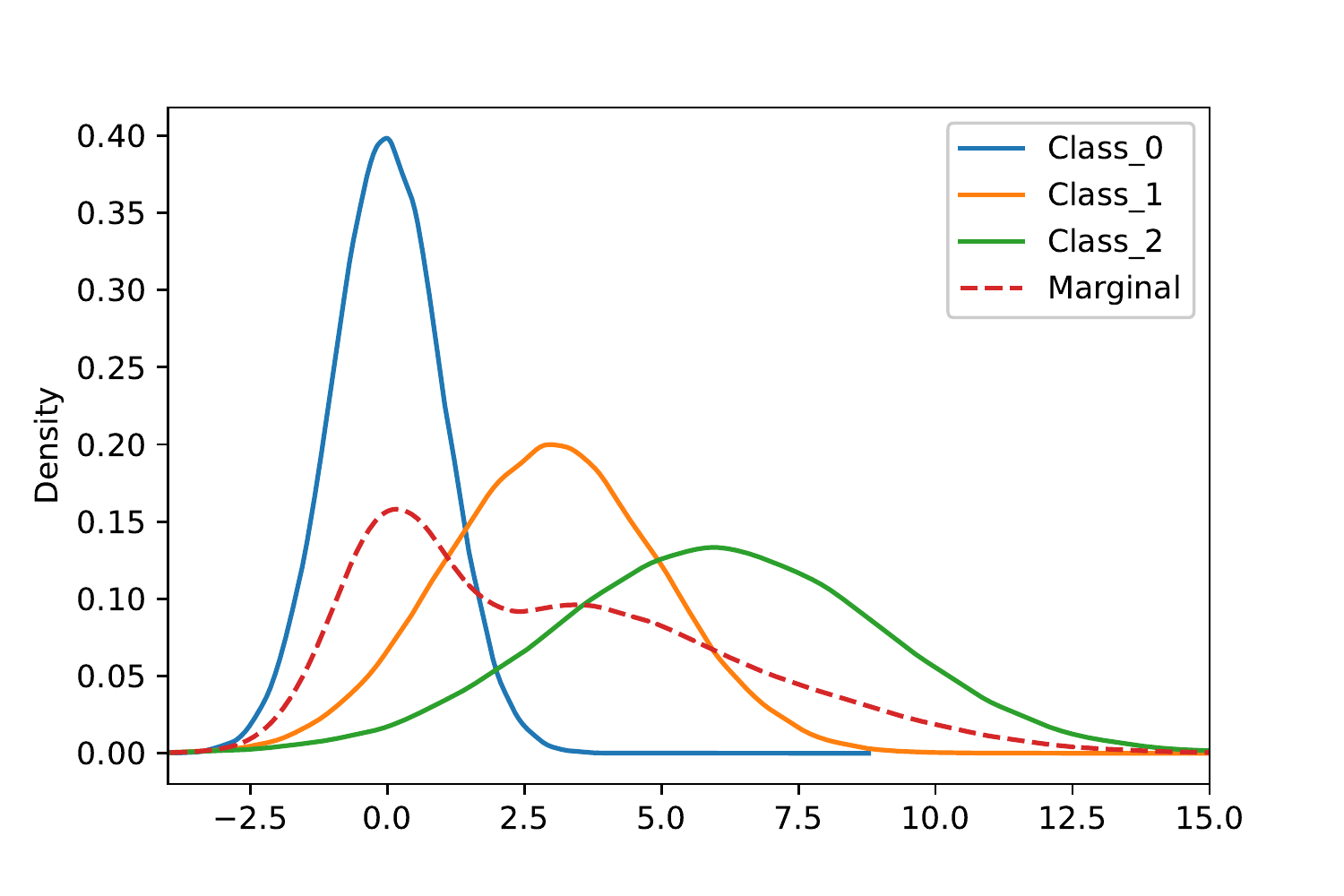}}
\subfigure[AC-GAN w/o $V(G,D)$]{\label{fig:synthetic_acgan_wo}
\includegraphics[width=0.22\textwidth]{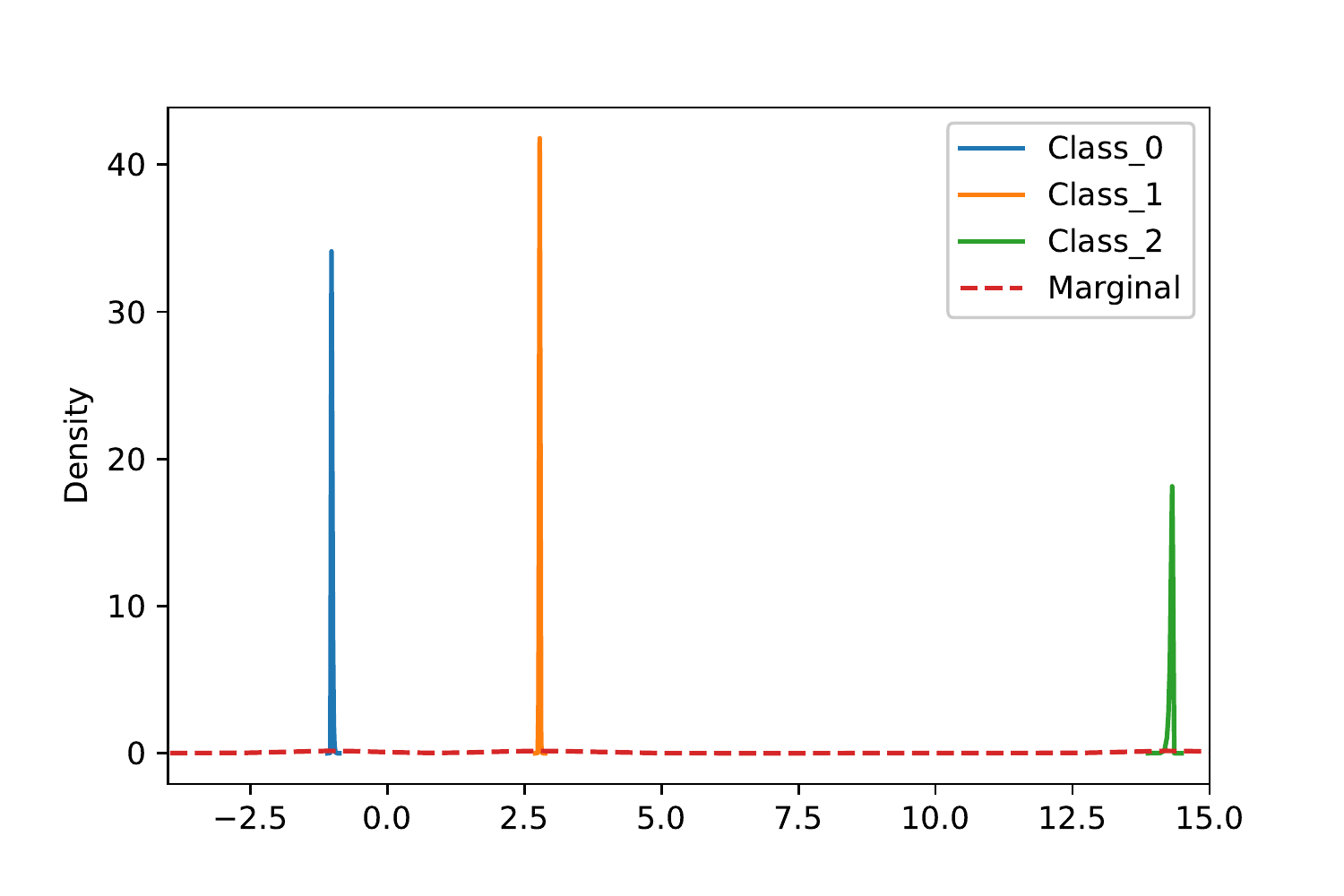}}
\subfigure[TAC-GAN w/o $V(G,D)$]{\label{fig:synthetic_tacgan_wo}
\includegraphics[width=0.22\textwidth]{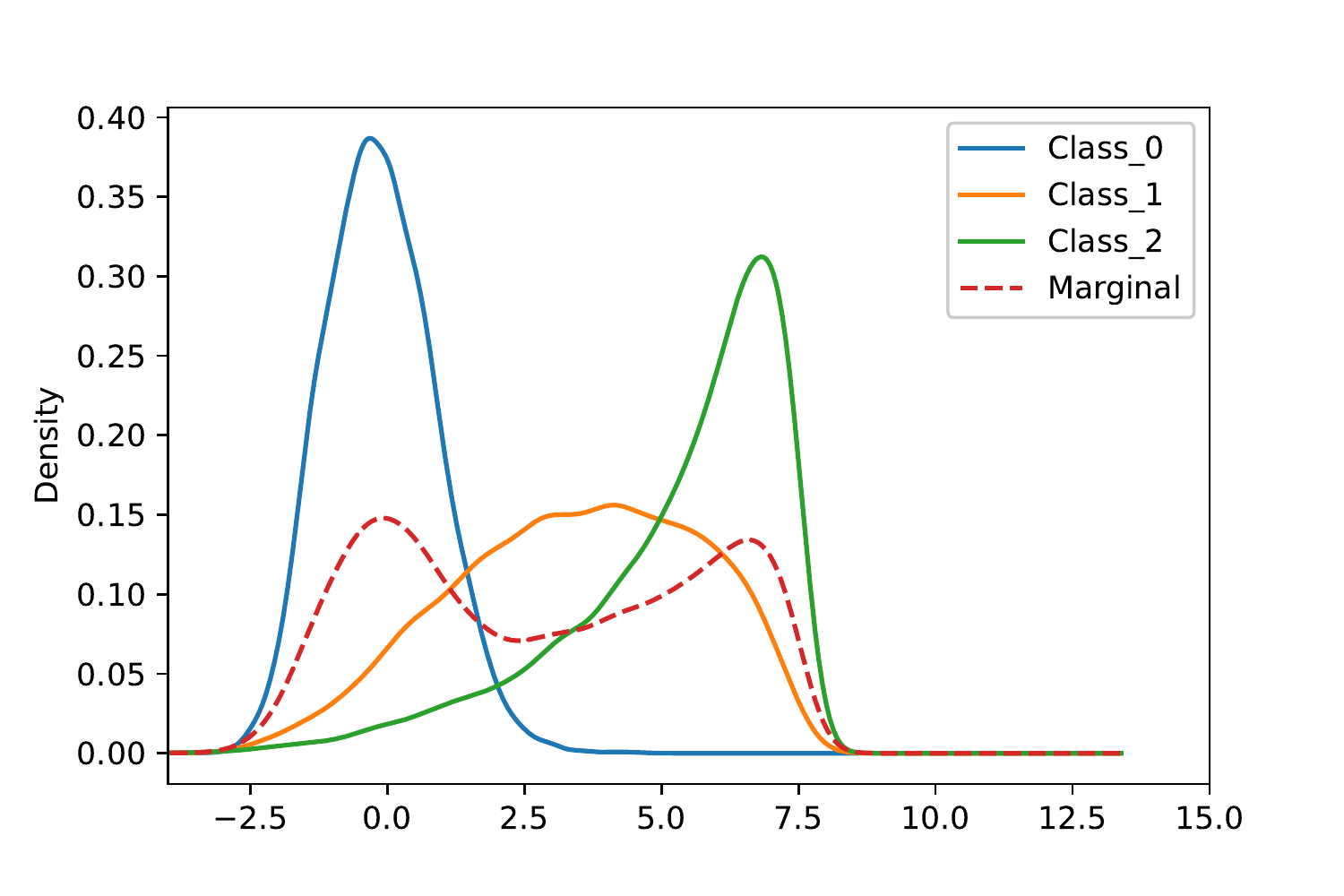}}
\subfigure[ADC-GAN w/o $V(G,D)$]{\label{fig:synthetic_adcgan_wo}
\includegraphics[width=0.22\textwidth]{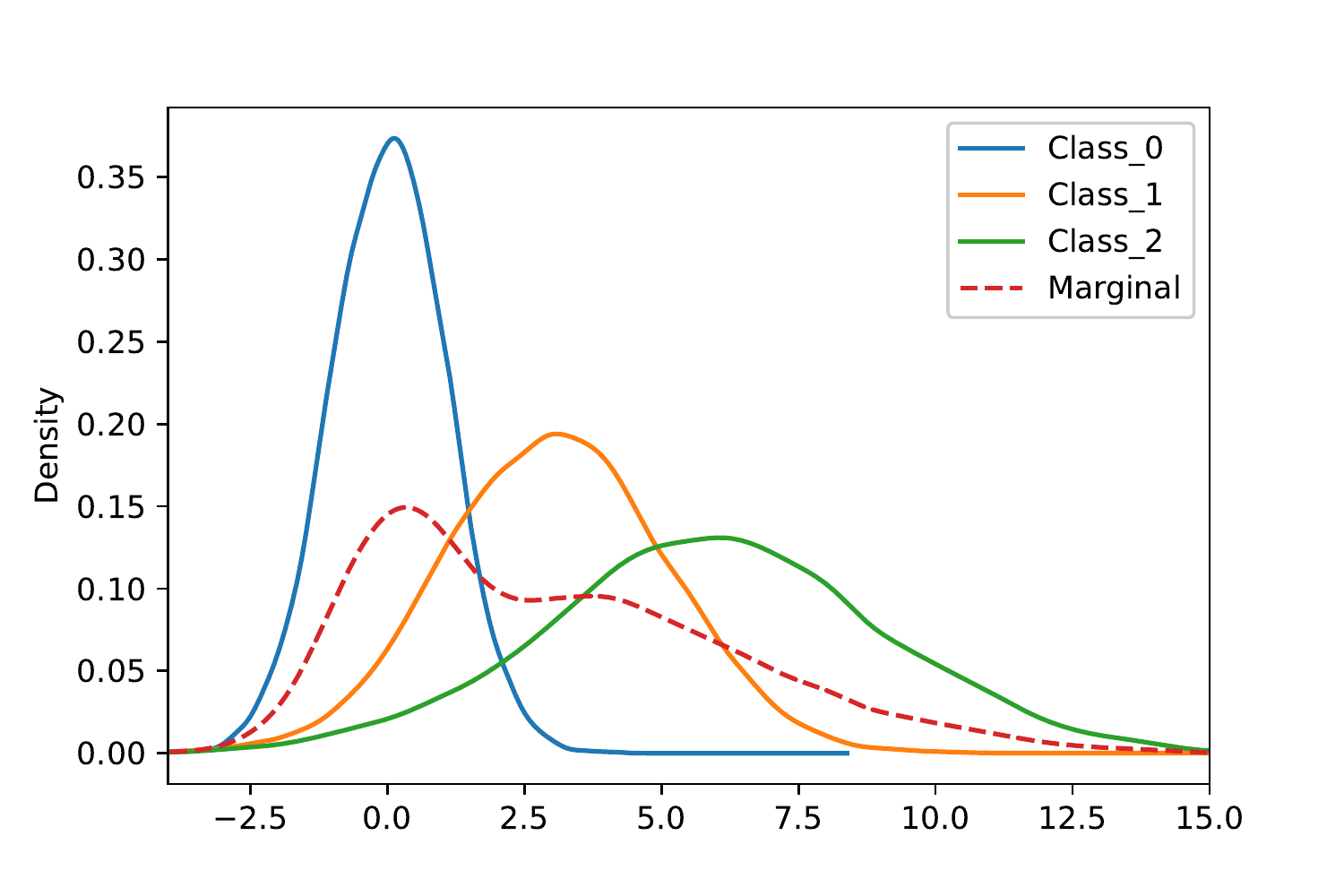}}
\subfigure[PD-GAN]{\label{fig:synthetic_pdgan}
\includegraphics[width=0.22\textwidth]{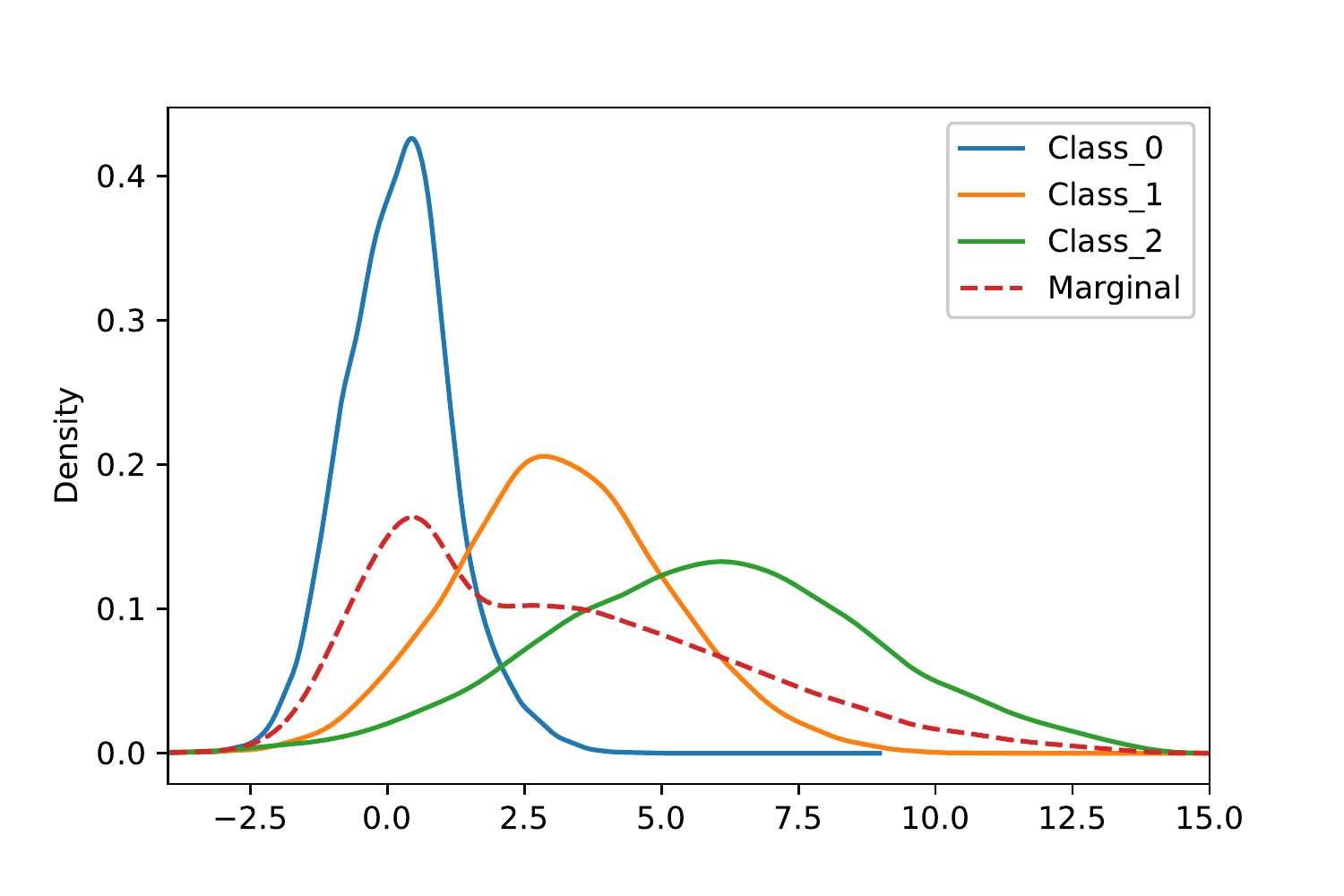}}
\subfigure[AC-GAN w/ $V(G,D)$]{\label{fig:synthetic_acgan}
\includegraphics[width=0.22\textwidth]{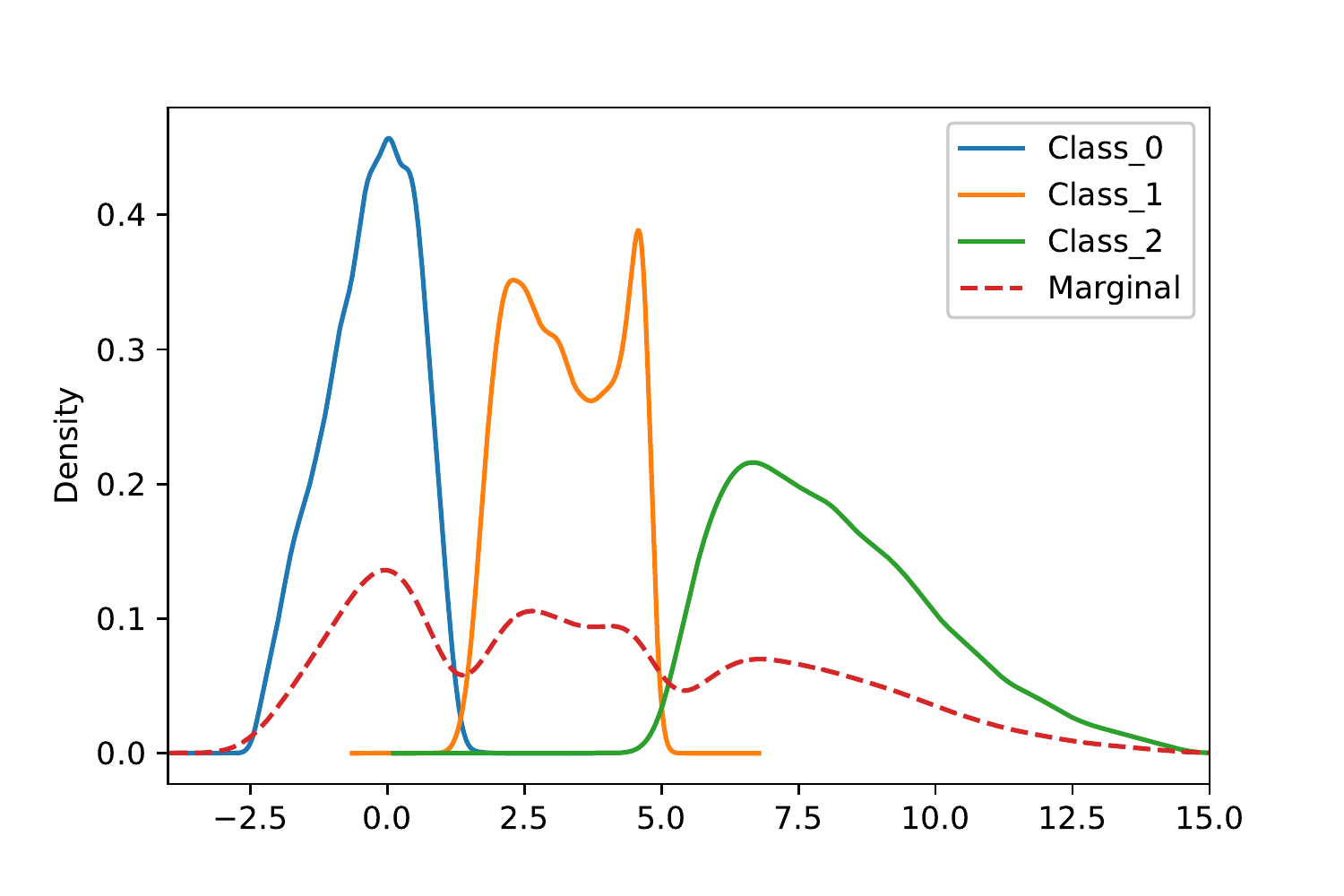}}
\subfigure[TAC-GAN w/ $V(G,D)$]{\label{fig:synthetic_tacgan}
\includegraphics[width=0.22\textwidth]{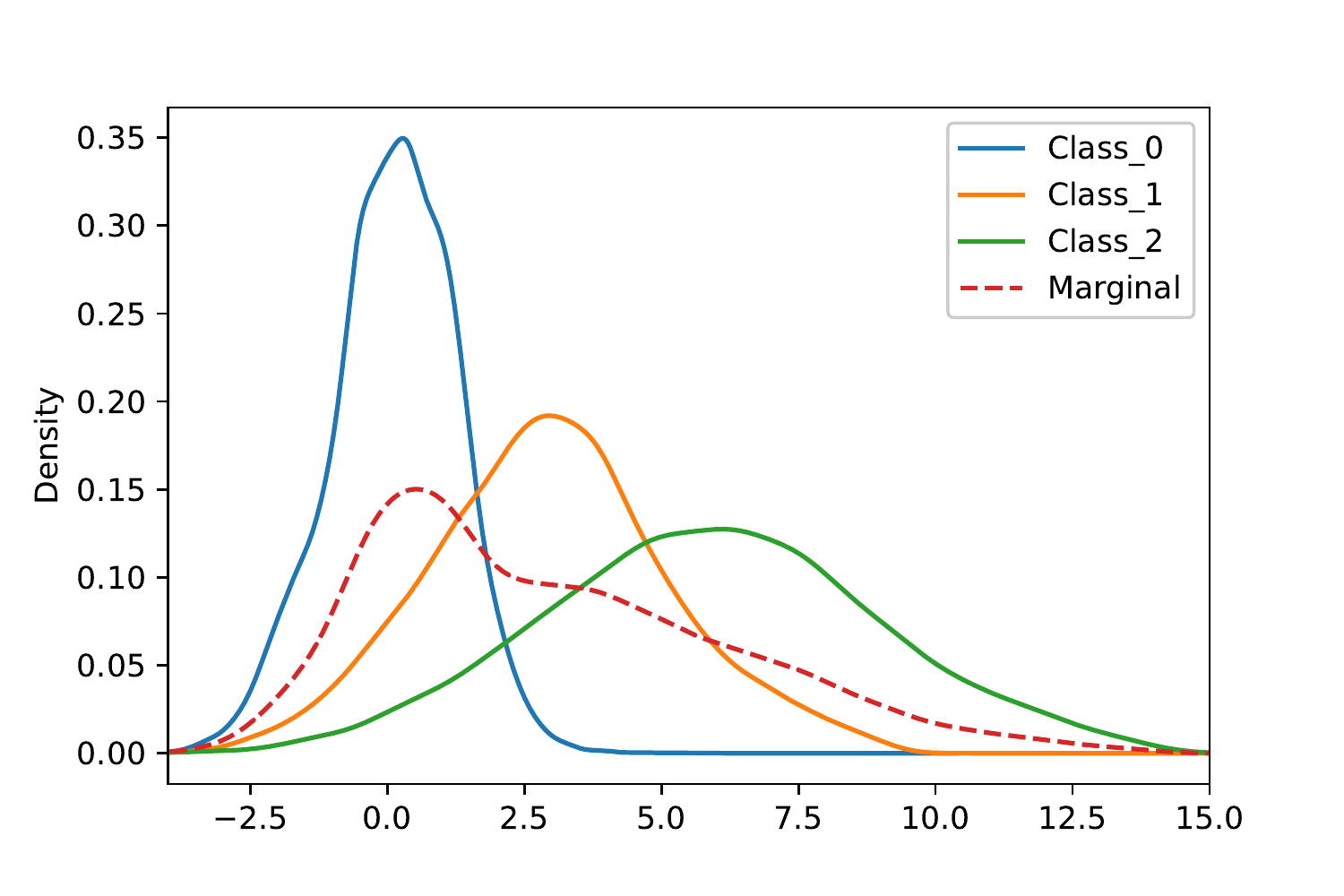}}
\subfigure[ADC-GAN w/ $V(G,D)$]{\label{fig:synthetic_adcgan}
\includegraphics[width=0.22\textwidth]{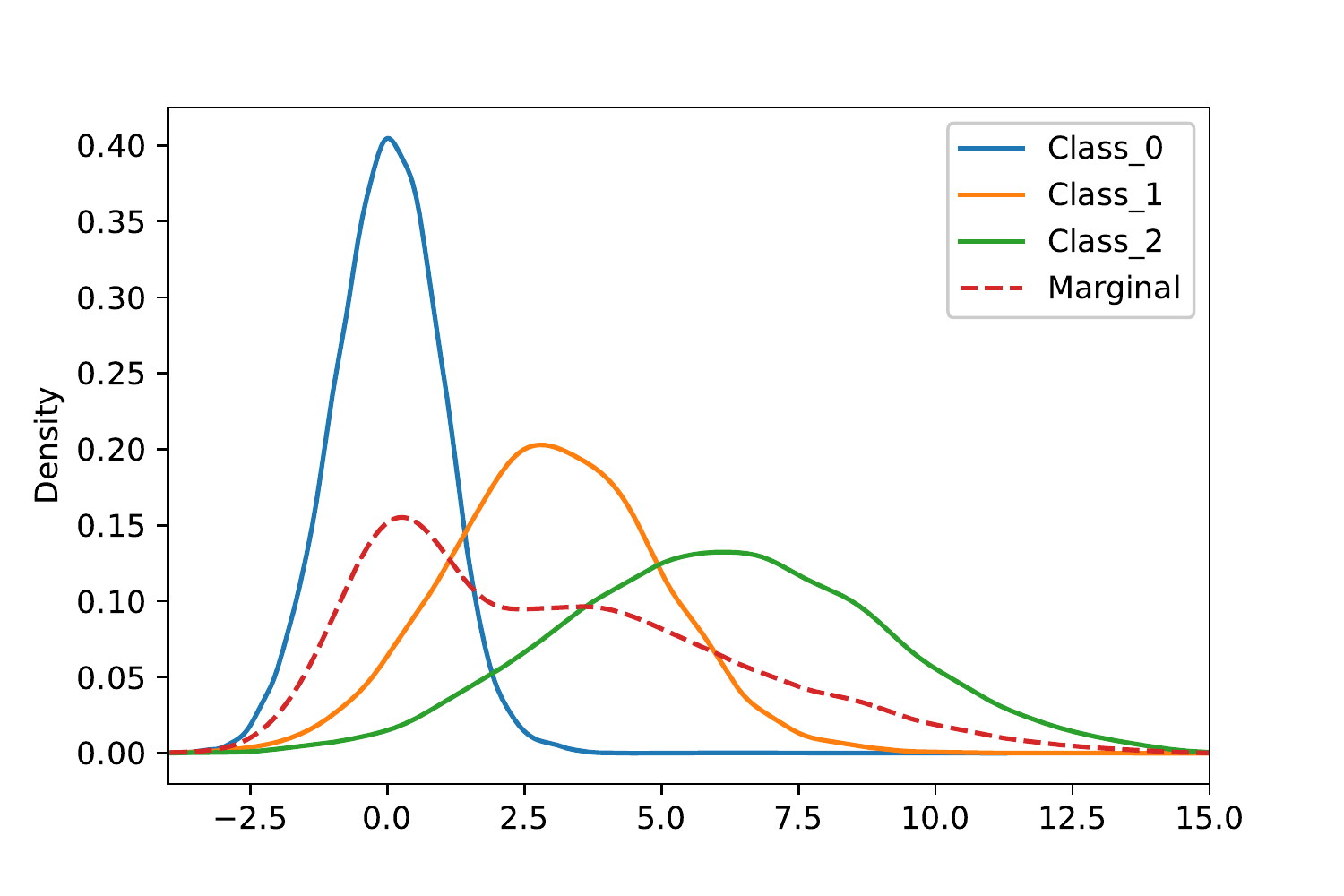}}
\caption{Qualitative comparison of distribution modeling results on the one-dimensional synthetic data.}
\label{fig:synthetic}
\end{center}
\end{figure*}

\begin{table*}[tbp]
\caption{FID and Intra-FID and Accuracy (\%) comparisons on CIFAR-10, CIFAR-100, and Tiny-ImageNet, respectively.}
\label{tbl:c10_c100_ti200}
\begin{center}
\begin{sc}
\begin{tabular}{ccccccc}
\toprule
Datasets & Metrics & PD-GAN & AC-GAN & AM-GAN & TAC-GAN & ADC-GAN \\
\midrule
\multirow{3}*{CIFAR-10}
& FID~($\downarrow$) & $6.23$ & $6.50$ & $6.81$ & $5.83$ & $\mathbf{5.66}$ \\
& Intra-FID~($\downarrow$) & $48.90$ & $57.67$ & $69.31$ & $56.67$ & $\mathbf{40.45}$ \\
& Accuracy~($\uparrow$) & $66.22$ & $84.69$ & $83.63$ & $88.27$ & $\mathbf{89.51}$ \\
\midrule
\multirow{3}*{CIFAR-100}
& FID~($\downarrow$) & $8.70$ & $11.24$ & $10.42$ & $10.38$ & $\mathbf{8.12}$ \\
& Intra-FID~($\downarrow$) & $51.15$ & $83.06$ & $78.11$ & $79.59$ & $\mathbf{49.24}$ \\
& Accuracy~($\uparrow$) & $37.89$ & $55.26$ & $55.77$ & $60.03$ & $\mathbf{64.24}$ \\
\midrule
\multirow{3}*{Tiny-ImageNet}
& FID~($\downarrow$) & $26.10$ & $25.02$ & $21.34$ & $21.12$ & $\mathbf{19.02}$ \\
& Intra-FID~($\downarrow$) & $66.23$ & $99.04$ & $90.56$ & $95.48$ & $\mathbf{63.05}$ \\
& Accuracy~($\uparrow$) & $27.79$ & $44.59$ & $44.67$ & $44.44$ & $\mathbf{48.89}$ \\
\bottomrule
\end{tabular}
\end{sc}
\end{center}
\end{table*}

\subsection{Synthetic Data}\label{sec:synthetic}

We first conduct experiments on a one-dimensional synthetic mixture of Gaussians, following the practices of~\cite{NEURIPS2019_4ea06fbc}, to qualitatively show the fidelity of distribution learning capability of ADC-GAN.
As shown in \cref{fig:synthetic_data}, the real data distribution consists of three classes with non-negligible overlaps.
\cref{fig:synthetic_acgan_wo,fig:synthetic_tacgan_wo,fig:synthetic_adcgan_wo} show the learned distributions, which are estimated by kernel density estimation (KDE)~\cite{parzen1962estimation} on the generated data of AC-GAN, TAC-GAN, and ADC-GAN without the original GAN loss $V(G,D)$, respectively.
\cref{fig:synthetic_pdgan,fig:synthetic_acgan,fig:synthetic_tacgan,fig:synthetic_adcgan} show the KDE results of PD-GAN, AC-GAN, TAC-GAN, and ADC-GAN trained with the non-saturating GAN loss~\cite{NIPS2014_5ca3e9b1}, respectively.
AC-GAN tends to generate classifiable data so that it decreases the intra-class diversity.
Without the GAN loss $V(G,D)$, AC-GAN outputs nearly deterministic data for each class.
TAC-GAN without the GAN loss also cannot accurately capture the real data distribution, verifying the contradiction in \cref{thm:tacgan-g}.
Impressively, the proposed ADC-GAN faithfully restores the real data distribution even without the GAN loss, validating \cref{thm:adcgan-g} that the discriminative classfier alone can guide the generator to learn the real data distribution.

\subsection{Experiments based on BigGAN-PyTorch}

In this section, we conduct experiments on three common real-world datasets: CIFAR-10, CIFAR-100~\cite{krizhevsky2009learning}, and Tiny-ImageNet~\cite{le2015tiny} based on the BigGAN-PyTorch repository\footnote{\url{https://github.com/ajbrock/BigGAN-PyTorch}} with our extensions\footnote{\url{https://github.com/houliangict/adcgan}}.
The optimizer is Adam with learning rate of $2\times 10^{-4}$ on CIFAR-10/100 and $1\times 10^{-4}$ for the generator and $4\times 10^{-4}$ for the discriminator on Tiny-ImageNet.
We train all methods for $1000$ and $500$ epochs with batch size of $50$ and $100$ on CIFAR-10/100 and Tiny-ImageNet, respectively.
The discriminator/classifier are updated $4$ and $2$ times per generator update step on CIFAR-10/100 and Tiny-ImageNet, respectively.
We follow the practice of~\cite{miyato2018cgans,NEURIPS2019_4ea06fbc} to adopt the hinge loss~\cite{lim2017geometric,NIPS2017_6f1d0705} as the implementation of $V(G,D)$.
The coefficient hyperparameters of AC-GAN and AM-GAN~\cite{zhou2018activation} (cf. \cref{sec:amgan} for analysis) are set as $\lambda=0.2$ as it performs the best.
As for TAC-GAN and ADC-GAN, the coefficient hyperparameters are set as $\lambda=1.0$ on CIFAR-10/100 and $\lambda=0.5$ on Tiny-ImageNet.

\textbf{Image Generation.} We use the Fr\'{e}chet Inception Distance (FID)~\cite{NIPS2017_8a1d6947} and Intra-FID~\cite{miyato2018cgans} metrics to measure the overall and intra-class qualities of the generated images, respectively.
\cref{tbl:c10_c100_ti200} shows that ADC-GAN obtains the best FID and Intra-FID scores on all three datasets, indicating consistent superiority over previous cGANs in conditional image generation.

\textbf{Training Stability.} We also note that ADC-GAN yields the best training stability according to the FID training curves (cf. \cref{fig:c100_fid_curve,fig:fid_curve}).
Even without the discriminator, the training stability ADC-GAN (w/o D) still exceeds that of most competing methods.
AC-GAN diverges during training on all three datasets.
TAC-GAN also diverges on CIFAR-100 and Tiny-ImageNet and achieves a relatively stable FID training curve only on the simplest dataset, CIFAR-10.
We hence report the results of all methods using the best checkpoint.
These unstable FID training curves implicitly verify the drawback of existing classifier-based cGANs that optimize contradictory divergences.

\textbf{Different Coefficients.} To explicitly show the above issues, we set the objective function of classifier-based cGANs as $(1-\lambda')V(G,D)+\lambda'V_\mathrm{C}(G,C)$, where $V_\mathrm{C}(G,C)$ is the task between the generator and classifier.
As shown in \cref{fig:c100_lambda,fig:lambda}, ADC-GAN consistently gains superior FID scores across different coefficient hyperparameters even for $\lambda'=1.0$ (i.e.,~without the discriminator), showing strong robustness with respect to$\lambda'$, while AC-GAN and TAC-GAN perform substantially worse when $\lambda'$ becomes larger.

\begin{figure}[t]
\begin{center}
\subfigure[FID curve]{
\label{fig:c100_fid_curve}
\includegraphics[width=0.225\textwidth]{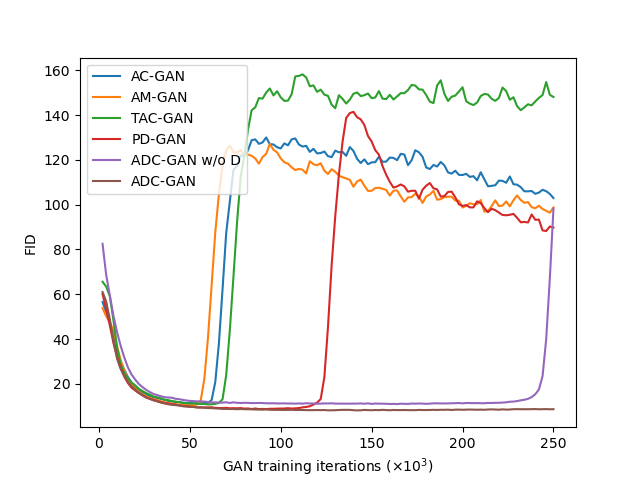}}
\subfigure[FID with different $\lambda'$]{
\label{fig:c100_lambda}
\includegraphics[width=0.225\textwidth]{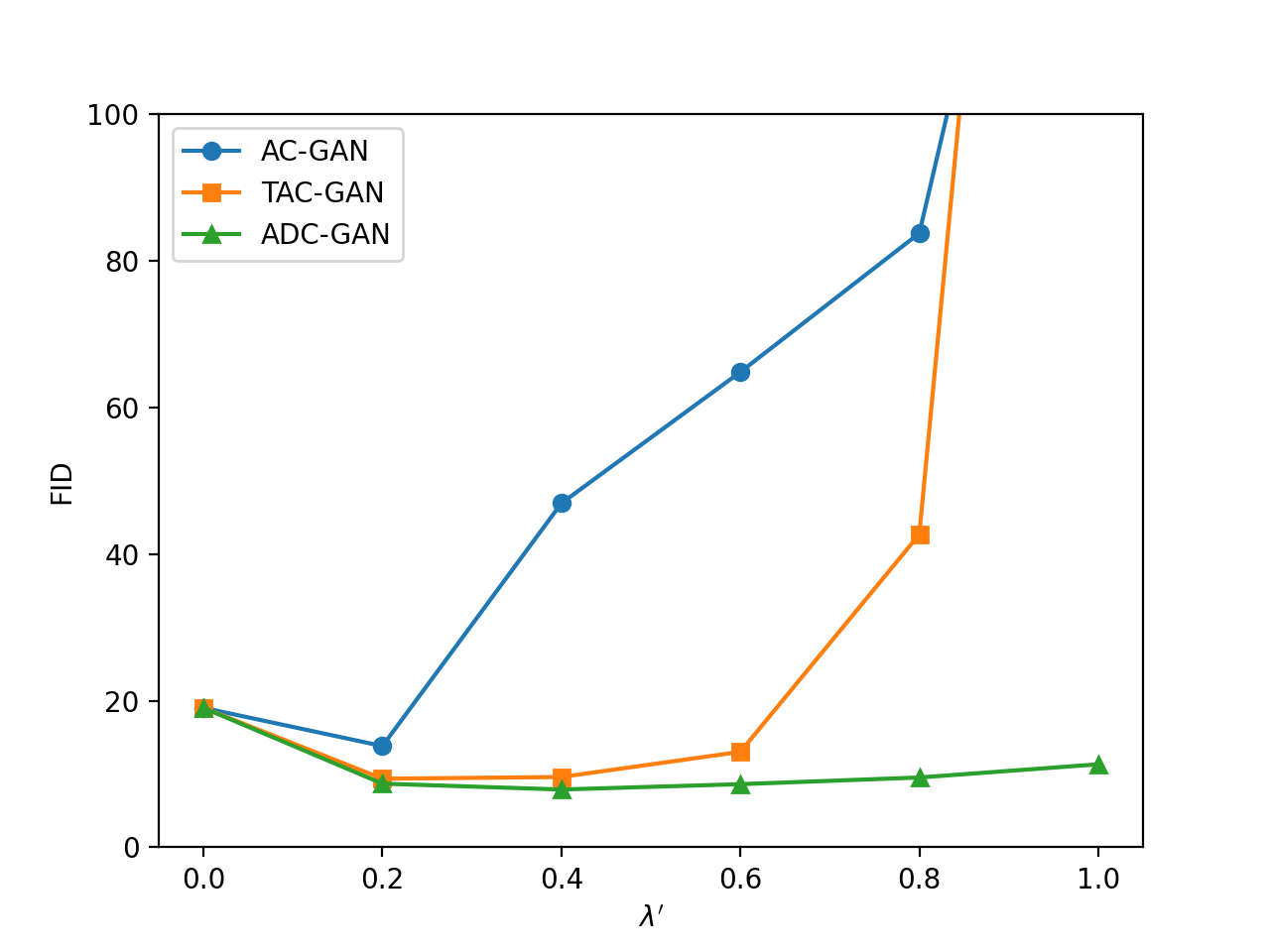}}
\caption{(a) FID curves during GAN training on CIFAR-100. (b) FID scores of classifier-based cGANs with different $\lambda'$ on CIFAR-100. The objective function in this experiment is $(1-\lambda')V(G,D)+\lambda'V_\mathrm{C}(G,C)$, where $V_\mathrm{C}(G,C)$ is the task between the generator and classifier.}
\label{fig:c100}
\end{center}
\end{figure}

\begin{figure}[t]
\begin{center}
\subfigure[PD-GAN]{
\label{fig:c10_tsne_pdgan}
\includegraphics[width=0.225\textwidth]{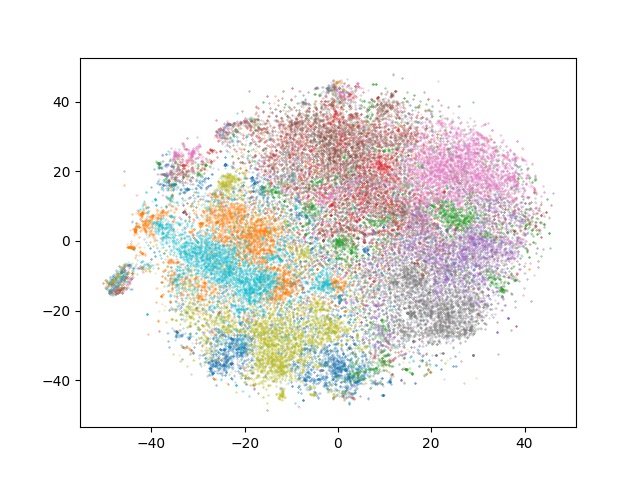}}
\subfigure[ADC-GAN]{
\label{fig:c10_tsne_adcgan}
\includegraphics[width=0.225\textwidth]{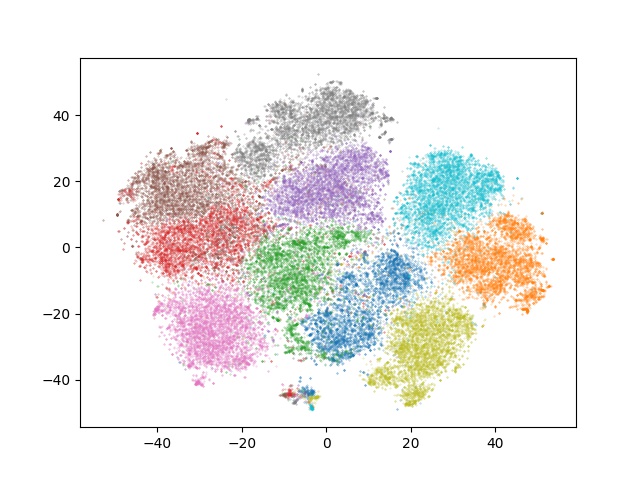}}
\caption{T-SNE visualization of CIFAR-10 validation data based on learned representations extracted from the penultimate layer in the discriminator/classifier $\phi(x)$. Different colors indicate different classes.}
\label{fig:tsne}
\end{center}
\end{figure}

\textbf{Data-to-Class Relations.} To investigate whether the model captures appropriate data-to-class relations, we conduct image classification experiments based on the learned representations of the discriminator/classifier $\phi(x)$.
Specifically, we first train a logistic regression classifier using the scikit-learn library with the training data and compute the classification accuracy of the validation data.
As reported in \cref{tbl:c10_c100_ti200}, ADC-GAN significantly outperforms competing methods on all datasets in terms of the Accuracy metrics.
The reason is that the discriminative classifier needs to recognize the labels of data while simultaneously distinguishing between real and fake data, which facilitates the robustness of the classifier in modeling data-to-class relations.
Notice that PD-GAN obtains the worst results.
By comparing the CIFAR-10 T-SNE~\cite{van2008visualizing} visualization results of PD-GAN and ADC-GAN in \cref{fig:tsne}, it is clear that PD-GAN does not have the ability to learn proper data-to-class relations as ADC-GAN does, reflecting the problem caused by the loss of partition terms in PD-GAN.

\subsection{Experiments based on PyTorch-StudioGAN}

\begin{table}[tbp]
\caption{FID and IS comparisons on ImageNet ($128\times 128$). B.S. means the batch size and Iters. means the training iterations. Results of BigGAN and ReACGAN are copied from the ReACGAN paper~\cite{kang2021rebooting}.}
\label{tbl:imagenet}
\begin{center}
\begin{sc}
\begin{tabular}{ccccc}
\toprule
B.S. & Iters. & Methods & IS ($\uparrow$) & FID ($\downarrow$) \\
\midrule
\multirow{3}*{$256$} & \multirow{3}*{$500$k} & BigGAN & $43.97$ & $16.36$ \\
& & ReACGAN & $\mathbf{68.27}$ & $13.98$ \\
& & ADC-GAN & $66.96$ & $\mathbf{11.65}$ \\
\midrule
\multirow{4}*{$2048$} & \multirow{3}*{$200$k} & BigGAN & $99.71$ & $\mathbf{7.89}$ \\
& & ReACGAN & $92.74$ & $8.23$ \\
& & ADC-GAN & $97.47$ & $9.46$ \\
\cmidrule{2-5}
& $500$k & ADC-GAN & $\mathbf{108.10}$ & $8.02$ \\
\bottomrule
\end{tabular}
\end{sc}
\end{center}
\end{table}

In this section, we compare ADC-GAN with state-of-the-art cGANs using the PyTorch-StudioGAN repository\footnote{\url{https://github.com/POSTECH-CVLab/PyTorch-StudioGAN}}, of which evaluation protocols are different from that of the BigGAN-PyTorch repository that we used in \cref{tbl:c10_c100_ti200}.
Nonetheless, our comparison is fair because the methods in each experiment follows the same evaluation protocol.

\textbf{Image Generation on ImageNet.} We first conduct experiments on ImageNet ($128\times 128$) following the experimental settings of ReACGAN~\cite{kang2021rebooting}.
\cref{tbl:imagenet} reports the Inception Score (IS)~\cite{NIPS2016_8a3363ab} and FID results.
Our ADC-GAN is comparable with the state-of-the-art cGANs, BigGAN and ReACGAN~\cite{kang2021rebooting}, in the batch size of $256$ and $2048$, showing effectiveness on large-scale high-resolution image datasets.
Notice that, however, we only ran our ADC-GAN once with $\lambda=1$ in each of the two batch size settings, and did not make other attempts due to our limited computational resources.
We argue that the results of ADC-GAN can be improved by choosing an appropriate coefficient hyperparameter $\lambda$.

\textbf{Different GAN Losses.} We also investigate the robustness of ADC-GAN with respect to the GAN loss function $V(G,D)$ by adopting different versions.
\cref{tbl:gan_loss} report the qualitative results on CIFAR-100 (cf. \cref{tbl:gan_loss_all} in \cref{sec:more} for complete results).
Impressively, the proposed ADC-GAN achieves the best iFID (intra-FID), recall~\cite{NEURIPS2019_0234c510}, and coverage~\cite{pmlr-v119-naeem20a} scores across the non-saturation~\cite{NIPS2014_5ca3e9b1}, WGAN-GP~\cite{NIPS2017_892c3b1c}, and hinge~\cite{lim2017geometric} versions of the GAN loss.
The best iFID scores indicate the best conditional generative modeling performance, and the best recall and coverage results reflect the best (intra-class) diversity of the generated samples.

\begin{table*}[tbp]
\caption{IS, FID, iFID, Precision, Recall, Density, and Coverage comparisons with state-of-the-art methods under different GAN loss functions on CIFAR-100, respectively. The best results are bold and the second best are underlined.}
\label{tbl:gan_loss}
\vskip -0.1in
\begin{center}
\begin{small}
\begin{sc}
\begin{tabular}{ccccccccc}
\toprule
GAN Loss & Methods & IS $\uparrow$ & FID $\downarrow$ & iFID $\downarrow$ & Precision $\uparrow$ & Recall $\uparrow$ & Density $\uparrow$ & Coverage $\uparrow$ \\
\midrule
\multirow{6}*{Non-saturation}
& PD-GAN & $11.48$ & $\underline{11.59}$ & $\underline{105.38}$ & $0.7337$ & $\underline{0.6804}$ & $\underline{0.8646}$ & $\underline{0.8513}$ \\
& AC-GAN & $7.98$ & $49.46$ & $207.56$ & $0.7322$ & $0.0793$ & $0.6225$ & $0.4112$ \\
& TAC-GAN & $11.34$ & $14.47$ & $131.90$ & $\underline{0.7429}$ & $0.6077$ & $0.8324$ & $0.7887$ \\
& ADC-GAN & $\mathbf{11.88}$ & $\mathbf{11.07}$ & $\mathbf{104.21}$ & $0.7379$ & $\mathbf{0.6972}$ & $0.8521$ & $\mathbf{0.8609}$ \\
& ContraGAN & $11.15$ & $13.54$ & $146.86$ & $0.7390$	& $0.6155$ & $0.8481$ & $0.7729$ \\
& ReACGAN & $\underline{11.79}$ & $13.72$ & $125.21$ & $\mathbf{0.7541}$ & $0.5861$ & $\mathbf{0.8695}$ & $0.8005$ \\
\midrule
\multirow{6}*{W-GP}
& PD-GAN & $5.66$ & $69.48$ & $-$ & $0.5976$ & $0.1603$ & $0.4310$ & $0.2649$ \\
& AC-GAN & $10.97$ & $19.30$ & $148.40$ & $0.6880$ & $0.5444$ & $0.6770$ & $0.7242$ \\
& TAC-GAN & $\mathbf{11.04}$ & $\underline{15.56}$ & $\underline{121.23}$ & $\underline{0.7023}$ & $\underline{0.6474}$ & $\underline{0.7048}$ & $\underline{0.7535}$ \\
& ADC-GAN & $\underline{11.01}$ & $\mathbf{14.02}$ & $\mathbf{101.14}$ & $\mathbf{0.7058}$ & $\mathbf{0.6804}$ & $\mathbf{0.7549}$ & $\mathbf{0.7956}$ \\
& ContraGAN & $6.72$ & $49.77$ & $147.22$ & $0.6498$ &	$0.2834$ & $0.5827$ & $0.3549$ \\
& ReACGAN & $6.67$ & $47.74$ & $150.7$ & $0.6188$ & $0.3104$ & $0.4806$ & $0.3396$ \\
\midrule
\multirow{6}*{Hinge}
& PD-GAN & $11.76$ & $\underline{10.96}$ & $\underline{108.08}$ & $0.7436$ & $\underline{0.6812}$ & $0.8790$ & $\underline{0.8609}$ \\
& AC-GAN & $11.66$ & $21.65$ & $168.87$ & $\mathbf{0.7577}$ & $0.3649$ & $0.8297$ & $0.7225$ \\
& TAC-GAN & $\mathbf{12.07}$ & $12.56$ & $134.75$ & $\underline{0.7572}$ & $0.6020$ & $\underline{0.8957}$ & $0.8400$ \\
& ADC-GAN & $\underline{11.82}$ & $\mathbf{10.73}$ & $\mathbf{103.78}$ & $0.7387$ & $\mathbf{0.7023}$ & $0.8721$ & $\mathbf{0.8707}$ \\
& ContraGAN & $10.08$ & $13.22$ & $128.50$ & $0.7372$ & $0.6251$ & $0.8356$ & $0.7790$ \\
& ReACGAN & $11.80$ & $12.52$ & $140.47$ & $0.7510$ & $0.5982$ & $\mathbf{0.9300}$ & $0.8327$ \\
\bottomrule
\end{tabular}
\end{sc}
\end{small}
\end{center}
\vskip -0.15in
\end{table*}

\section{Related Work}

Efforts on developing cGANs~\cite{mirza2014conditional} can be divided into two steps.
The first is to study how to implement a conditional generator.
Methods in this category are concatenation~\cite{mirza2014conditional}, conditional batch normalization~\cite{NIPS2017_6fab6e3a}, and conditional convolution layers~\cite{sagong2019cgans}.
The second is to study how to train the conditional generator to produce label-dependent samples, which can be further divided into two categories, classifier-based and projection-based cGANs.

\textbf{Classifier-based cGANs.}
AC-GAN~\cite{pmlr-v70-odena17a} leveraged an auxiliary classifier to identify consistency between data and labels.
MH-GAN~\cite{Kavalerov_2021_WACV} improved AC-GAN by replacing the cross-entropy loss of the classifier with the multi-hinge loss.
AM-GAN~\cite{zhou2018activation} replaced the discriminator with a $K+1$-way classifier with an additional ``fake'' label.
Omni-GAN~\cite{zhou2020omni} combined the discriminator with the classifier to construct a $K+2$-dimensional multi-label classifier.
TAC-GAN~\cite{NEURIPS2019_4ea06fbc} corrected the biased learning objective of AC-GAN by introducing another classifier, which is the multi-class version of Anti-Labeler of CausalGAN~\cite{kocaoglu2018causalgan}.
UAC-GAN~\cite{han2020unbiased} improved the training stability of TAC-GAN with MINE~\cite{pmlr-v80-belghazi18a}.
ECGAN~\cite{chen2021a} provides a unified view of cGANs with and without classifiers.
Orthogonally to our work, ContraGAN~\cite{NEURIPS2020_f490c742} and ReACGAN~\cite{kang2021rebooting} modeled data-to-data relations as well as data-to-class relations using the conditional contrastive loss and the data-to-data cross-entropy loss, respectively.
However, they did not solve the low intra-class diversity problem of AC-GAN as they inherited the generator-agnostic classifier.

\textbf{Projection-based cGANs.}
PD-GAN~\cite{miyato2018cgans} injected the class information into the discriminator via label projection and achieved the state-of-the-art generation quality of natural images~\cite{brock2018large,wu2019logan,Zhang2020Consistency,zhao2021improved}.
P2GAN~\cite{Han_2021_ICCV} further improved PD-GAN by compensating the missed partition term in the objective function.

\textbf{Discriminative classifiers.}
\citet{pmlr-v139-watanabe21a} exploited the discriminative classifier for training GANs with any level of labeling but different from us with the objective function for the generator, which enables ADC-GAN to faithfully learn the target distribution.
SSGAN-LA~\cite{hou2021selfsupervised} presented the similar idea but different loss functions with ADC-GAN (multi-hinge v.s. cross-entropy) to tackle the degraded learning objective of self-supervised GANs, while ADC-GAN is for conditional GANs.
Moreover, our analysis of the degradation objective is more accurate and informative than that of SSGAN-LA.

\vskip -0.05in
\section{Conclusion}

In this paper, we present a novel conditional generative adversarial network with an auxiliary discriminative classifier (ADC-GAN) to achieve faithful conditional generative modeling.
We also discuss the differences between ADC-GAN with competing cGANs and analyze their potential issues and limitations.
Extensive experimental results validate the theoretical superiority of ADC-GAN compared with state-of-the-art classifier-based and projection-based cGANs.

\section*{Acknowledgements}

This work is funded by the National Natural Science Foundation of China under Grant Nos. 62102402, U21B2046, and National Key R\&D Program of China (2020AAA0105200). Huawei Shen is also supported by Beijing Academy of Artificial Intelligence (BAAI).


\bibliography{ref}
\bibliographystyle{icml2022}

\newpage
\appendix
\onecolumn
\section{Proofs}\label{sec:proofs}

\subsection{Proof of \cref{pro:acgan-c}}
\label{sec:proof-acgan-c}
\acganC*

\begin{proof}
\begin{IEEEeqnarray}{rCl}
&&\max_C\mathbb{E}_{x,y\sim P_{X,Y}}[\log C(y|x)] = \mathbb{E}_{x\sim P_X}\mathbb{E}_{y\sim P_{Y|X}}[\log C(y|x)]
\\
&\Rightarrow&\min_C\mathbb{E}_{x\sim P_X}\mathbb{E}_{y\sim P_{Y|X}}[-\log C(y|x)] = \mathbb{E}_{x\sim P_X}[H(p(y|x))+\mathrm{KL}(p(y|x)\|C(y|x))] \\
&\Rightarrow&C^*(y|x)=\arg\min_C\mathrm{KL}(p(y|x)\|C(y|x))=p(y|x)=\frac{p(x,y)}{p(x)}
\end{IEEEeqnarray}
\end{proof}

\subsection{Proof of \cref{thm:acgan-g}}
\label{sec:proof-acgan-g}
\acganG*

\begin{proof}
\begin{IEEEeqnarray}{rCl}
&&\max_G\mathbb{E}_{x,y\sim Q_{X,Y}}\left[\log C^*(y|x)\right]=\mathbb{E}_{x,y\sim Q_{X,Y}}\left[\log \frac{p(x,y)}{p(x)}\right]=\mathbb{E}_{x,y\sim Q_{X,Y}}\left[\log \frac{p(x,y)}{q(x,y)}\frac{q(x)}{p(x)}\frac{q(x,y)}{q(x)}\right]\\
&\Rightarrow&\min_G\mathbb{E}_{x,y\sim Q_{X,Y}}\left[\log\frac{q(x,y)}{p(x,y)}\right]-\mathbb{E}_{x\sim Q_{X}}\left[\log\frac{q(x)}{p(x)}\right]-\mathbb{E}_{x,y\sim Q_{X,Y}}\left[\log\frac{q(x,y)}{q(x)}\right] \\
&\Rightarrow&\min_G \mathrm{KL}(Q_{X,Y}\|P_{X,Y})-\mathrm{KL}(Q_{X}\|P_{X})+H_Q(Y|X)
\end{IEEEeqnarray}
\end{proof}

\subsection{Proof of \cref{pro:adcgan-c}}
\label{sec:proof-adcgan-c}
\adcganC*

\begin{proof}
\begin{equation}
\max_{C_\mathrm{d}}\mathbb{E}_{x,y\sim P_{X,Y}}[\log C_\mathrm{d}(y^+|x)] + \mathbb{E}_{x,y\sim Q_{X,Y}}[\log C_\mathrm{d}(y^-|x)]
\Rightarrow\max_{C_\mathrm{d}}\mathbb{E}_{x,y\sim P^m_{X,Y}}[\log C_\mathrm{d}(y|x)],
\end{equation}
with $p^m(x,y^+)=\frac{1}{2}p(x,y)$, $p^m(x,y^-)=\frac{1}{2}q(x,y)$, and $p^m(x)=\sum_y p^m(x,y)=\frac{1}{2}p(x)+\frac{1}{2}q(x)$.
\begin{IEEEeqnarray}{rCl}
&\Rightarrow&\max_{C_\mathrm{d}}\mathbb{E}_{x\sim P^m_{X}}\mathbb{E}_{y\sim P^m_{Y|X}}[\log C_\mathrm{d}(y|x)]\Rightarrow\min_{C_\mathrm{d}}\mathbb{E}_{x\sim P^m_{X}}\mathbb{E}_{y\sim P^m_{Y|X}}[-\log C_\mathrm{d}(y|x)]\\
&\Rightarrow&\min_{C_\mathrm{d}} \mathbb{E}_{x\sim P^m_X}[H(p^m(y|x))+\mathrm{KL}(p^m(y|x)\|C_\mathrm{d}(y|x))] \\
&\Rightarrow& C_\mathrm{d}^*(y|x)=\arg\min_{C_\mathrm{d}}\mathrm{KL}(p^m(y|x)\|C_\mathrm{d}(y|x))=p^m(y|x)=\frac{p^m(x,y)}{p^m(x)}
\end{IEEEeqnarray}
Therefore, the optimal discriminative classifier of ADC-GAN has the form of $C_\mathrm{d}^*(y^+|x)=\frac{p^m(x,y^+)}{p^m(x)}=\frac{p(x,y)}{p(x)+q(x)}$ and $C_\mathrm{d}^*(y^-|x)=\frac{p^m(x,y^-)}{p^m(x)}=\frac{q(x,y)}{p(x)+q(x)}$ that conclude the proof.

\end{proof}

\subsection{Proof of \cref{thm:adcgan-g}}
\label{sec:proof-adcgan-g}
\adcganG*
\begin{proof}
\begin{IEEEeqnarray}{rCl}
&&\max_G \mathbb{E}_{x,y\sim Q_{X,Y}}\left[\log C_\mathrm{d}^*(y^+|x)\right] - \mathbb{E}_{x,y\sim Q_{X,Y}}\left[\log C_\mathrm{d}^*(y^-|x)\right] \\
&\Rightarrow&\max_G \mathbb{E}_{x,y\sim Q_{X,Y}}\left[\log \frac{p(x,y)}{p(x)+q(x)}\right] - \mathbb{E}_{x,y\sim Q_{X,Y}}\left[\log \frac{q(x,y)}{p(x)+q(x)}\right] \\
&\Rightarrow&\min_G\mathbb{E}_{x,y\sim Q_{X,Y}}\left[\log \frac{q(x,y)}{p(x,y)}\right]\Rightarrow \min_G \mathrm{KL}(Q_{X,Y}\|P_{X,Y})
\end{IEEEeqnarray}
\end{proof}

\subsection{Proof of \cref{thm:tacgan-g}}
\label{sec:proof-tacgan-g}

\begin{proposition}\label{pro:tacgan-c}
For fixed generator, the twin optimal classifiers of TAC-GAN have the following forms:
\begin{equation}\label{eq:tacgan-c}
C^*(y|x)=\frac{p(x,y)}{p(x)},
C_\mathrm{mi}^*(y|x)=\frac{q(x,y)}{q(x)}.
\end{equation}
\end{proposition}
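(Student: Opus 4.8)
\textbf{Proof proposal for Proposition \ref{pro:tacgan-c}.}
The plan is to exploit the separability of the TAC-GAN discriminator/classifier objective~\eqref{eq:tacgan}. Observe that in the maximization $\max_{D,C,C_\mathrm{mi}} V(G,D) + \lambda(\mathbb{E}_{x,y\sim P_{X,Y}}[\log C(y|x)] + \mathbb{E}_{x,y\sim Q_{X,Y}}[\log C_\mathrm{mi}(y|x)])$, the three networks appear in three disjoint terms: $D$ only in $V(G,D)$, $C$ only in the $P_{X,Y}$-expectation, and $C_\mathrm{mi}$ only in the $Q_{X,Y}$-expectation. Hence the optimization over $C$ and over $C_\mathrm{mi}$ decouples completely, and each can be solved on its own.

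First I would treat $C$. Its subproblem $\max_C \mathbb{E}_{x,y\sim P_{X,Y}}[\log C(y|x)]$ is literally the objective already handled in \cref{pro:acgan-c}: rewriting the expectation as $\mathbb{E}_{x\sim P_X}\mathbb{E}_{y\sim P_{Y|X}}[\log C(y|x)]$ and negating, one gets $\mathbb{E}_{x\sim P_X}[H(p(y|x)) + \mathrm{KL}(p(y|x)\|C(y|x))]$, which is minimized pointwise in $x$ by $C^*(y|x)=p(y|x)=\frac{p(x,y)}{p(x)}$ thanks to nonnegativity of the KL divergence. So I would simply invoke \cref{pro:acgan-c} verbatim here.

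Next I would treat $C_\mathrm{mi}$. The subproblem $\max_{C_\mathrm{mi}} \mathbb{E}_{x,y\sim Q_{X,Y}}[\log C_\mathrm{mi}(y|x)]$ is the same computation with the roles of the real joint distribution $P_{X,Y}$ replaced by the generated joint distribution $Q_{X,Y}$ (for fixed $G$): factor as $\mathbb{E}_{x\sim Q_X}\mathbb{E}_{y\sim Q_{Y|X}}[\log C_\mathrm{mi}(y|x)]$, negate to get $\mathbb{E}_{x\sim Q_X}[H(q(y|x)) + \mathrm{KL}(q(y|x)\|C_\mathrm{mi}(y|x))]$, and conclude $C_\mathrm{mi}^*(y|x)=q(y|x)=\frac{q(x,y)}{q(x)}$ wherever $q(x)>0$.

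There is essentially no hard part: the only thing to be careful about is the decoupling argument (making explicit that $V(G,D)$ does not involve $C$ or $C_\mathrm{mi}$, so the two classifier optima are obtained independently and independently of $D$), and the standard Gibbs-inequality step that a cross-entropy against a fixed conditional is minimized by that conditional itself. One could optionally remark that $C_\mathrm{mi}^*$ is only pinned down on the support of $Q_X$, which suffices since the generator's objective only ever evaluates $C_\mathrm{mi}$ under $Q_{X,Y}$.
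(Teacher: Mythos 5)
Your proposal is correct and follows essentially the same route as the paper: the paper's proof simply notes that $C$ and $C_\mathrm{mi}$ decouple and each is handled as in \cref{pro:acgan-c} with respect to $P$ and $Q$, which is exactly your argument (you just spell out the cross-entropy/KL step and the support caveat explicitly).
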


\begin{proof}

The proof is similar to that of \cref{pro:acgan-c} in \cref{sec:proof-acgan-c} by considering $C$ and $C_\mathrm{mi}$ as two independent classifiers with respect to distribution $P$ and $Q$, respectively.
\end{proof}

\tacganG*

\begin{proof}
\begin{IEEEeqnarray}{rCl}
&&\max_G\mathbb{E}_{x,y\sim Q_{X,Y}}[\log C^*(y|x)] - \mathbb{E}_{x,y\sim Q_{X,Y}}[\log C_\mathrm{mi}^*(y|x)] \\
&\Rightarrow &\max_G\mathbb{E}_{x,y\sim Q_{X,Y}}\left[\log \frac{p(x,y)}{p(x)}\right] - \mathbb{E}_{x,y\sim Q_{X,Y}}\left[\log \frac{q(x,y)}{q(x)}\right] \\
&\Rightarrow &\max_G\mathbb{E}_{x,y\sim Q_{X,Y}}\left[\log \frac{p(x,y)}{q(x,y)}\right] - \mathbb{E}_{x\sim Q_{X}}\left[\log \frac{p(x)}{q(x)}\right]\\
&\Rightarrow &\min_G\mathrm{KL}(Q_{X,Y}\|P_{X,Y}) - \mathrm{KL}(Q_{X}\|P_{X})
\end{IEEEeqnarray}
\end{proof}

\section{Analysis on the Original AC-GAN}\label{sec:acgan_full}

In this section, we show that original AC-GAN whose auxiliary classifier is trained with both real and generated samples still suffers from the same issue as we proved in \cref{thm:acgan-g}.
Formally, the full objective function of the original AC-GAN is formulated as the following:
\begin{IEEEeqnarray}{rCl}\label{eq:acgan_full}
\max_{D,C} V(G,D) & + & \lambda\cdot\left(\mathbb{E}_{x,y\sim P_{X,Y}}[\log C(y|x)]+\mathbb{E}_{x,y\sim Q_{X,Y}}[\log C(y|x)]\right), \\
\min_{G} V(G,D) & - & \lambda\cdot\left(\mathbb{E}_{x,y\sim Q_{X,Y}}[\log C(y|x)]\right).
\end{IEEEeqnarray}

The objective function for training the classifier can be rewritten as:
\begin{equation}
\max_C \mathbb{E}_{x,y\sim P_{X,Y}}[\log C(y|x)]+\mathbb{E}_{x,y\sim Q_{X,Y}}[\log C(y|x)]\Rightarrow \max_C \mathbb{E}_{x,y\sim P_{X,Y}^m}[\log C(y|x)],
\end{equation}
with $p^m(x,y)=\frac{1}{2}(p(x,y)+q(x,y))$ and $p^m(x)=\sum_y p^m(x,y)=\frac{1}{2}(p(x)+q(x))$.
And we can obtain the optimal classifier according to the following:
\begin{IEEEeqnarray}{rCl}
&&\max_C \mathbb{E}_{x,y\sim P_{X,Y}^m}[\log C(y|x)] \Rightarrow \min_C \mathbb{E}_{x\sim P_{X}^m,y\sim P^m_{Y|X}}[-\log C(y|x)] \\
&\Rightarrow& \min_C \mathbb{E}_{x\sim P_X^m}[H(p^m(y|x)) + \mathrm{KL}(p^m(y|x)\|C(y|x))] \\
&\Rightarrow& C^*(y|x)=p^m(y|x)=\frac{p(x,y)+q(x,y)}{p(x)+q(x)}.
\end{IEEEeqnarray}

Suppose that the conditional generator learns the joint distribution of real data and labels, i.e.,~$q(x,y)=p(x,y)$ and $q(x)=p(x)$, the optimal classifier $C^*(y|x)=\frac{p(x,y)+q(x,y)}{p(x)+q(x)}=\frac{p(x,y)}{p(x)}$ also provide the objective stated in \cref{thm:acgan-g} for the generator, which contains the conditional entropy of the generated samples $H_Q(Y|X)$ that reduces the intra-class diversity of the generated samples. 
In other words, the original classifier does not allow the generator to remain on the desired distribution because it still provides momentum to update the generator, resulting in a biased learning objective for the generator in the original version of AC-GAN.
The essential reason is that the classifier of the original AC-GAN is incapable of distinguishing the real data from the generated data.
Therefore, the classifier of the original AC-GAN cannot provide the difference between the real and generated joint distributions to optimize the generator.

\section{Analysis on AM-GAN}\label{sec:amgan}

AM-GAN~\cite{zhou2018activation} optimizes the following objectives with an label-extended discriminator $D_+:\mathcal{X}\to\mathcal{Y}\cup\{0\}$:

\begin{IEEEeqnarray}{rCl}\label{eq:amgan}
\max_{D_+} \mathbb{E}_{x,y\sim P_{X,Y}}&[&\log D_+(y|x)]+\mathbb{E}_{x,y\sim Q_{X,Y}}[\log D_+(0|x)], \\
\min_{G} \mathbb{E}_{x,y\sim Q_{X,Y}}&[&\log D_+(y|x)].
\end{IEEEeqnarray}
The objective function for training the discriminator $D_+$ can be rewritten as:
\begin{equation}
\max_{D_+} \mathbb{E}_{x,y\sim P_{X,Y}}[\log D_+(y|x)]+\mathbb{E}_{x,y\sim Q_{X,Y}}[\log D_+(0|x)]\Rightarrow \max_{D_+}\mathbb{E}_{x,y\sim P^m_{X,Y}}[\log D_+(y|x)],
\end{equation}
where $p^m(x,y)=\frac{1}{2}p(x,y), \forall y\in\mathcal{Y}$, $p^m(x,0)=\frac{1}{2}q(x)$, and $p^m(x)=\sum_y p^m(x,y)=\frac{1}{2}(p(x)+q(x))$.
Then we have:
\begin{IEEEeqnarray}{rCl}
&&\max_{D_+}\mathbb{E}_{x,y\sim P^m_{X,Y}}[\log D_+(y|x)]
\Rightarrow\min_{D_+}\mathbb{E}_{x\sim P^m_X,y\sim P^m_{Y|X}}[-\log D_+(y|x)]\\ 
&\Rightarrow&\min_{D_+}\mathbb{E}_{x\sim P^m_X}[H(p^m(y|x))+\mathrm{KL}(p^m(y|x)\|D^+(y|x))]
\Rightarrow D_+^*(y|x)=p^m(y|x)=\frac{p(x,y)}{p(x)+q(x)},\forall y\in\mathcal{Y}.
\end{IEEEeqnarray}
Under the optimal discriminator $D_+^*$, the generator of AM-GAN can be regarded as optimizing the following:
\begin{IEEEeqnarray}{rCl}
&&\max_{G}\mathbb{E}_{x,y\sim Q_{X,Y}}[\log D_+^*(y|x)]
\Rightarrow\max_{G}\mathbb{E}_{x,y\sim Q_{X,Y}}\left[\log \frac{p(x,y)}{p(x)+q(x)}\right] \\
&\Rightarrow&\min_{G}\mathbb{E}_{x,y\sim Q_{X,Y}}\left[\log \frac{q(x,y)}{p(x,y)}\frac{p(x)+q(x)}{q(x,y)}\right]=\mathbb{E}_{x,y\sim Q_{X,Y}}\left[\log \frac{q(x,y)}{p(x,y)}+\log\frac{p(x)+q(x)}{2}-\log q(x,y)+\log 2\right]\IEEEeqnarraynumspace\\
&\geq &\min_G\mathbb{E}_{x,y\sim Q_{X,Y}}\left[\log \frac{q(x,y)}{p(x,y)}+\frac{1}{2}\log p(x)+\frac{1}{2}\log q(x)-\log q(x,y)+\log 2\right] \\
&\Rightarrow&\min_G\mathbb{E}_{x,y\sim Q_{X,Y}}\left[\log \frac{q(x,y)}{p(x,y)}-\frac{1}{2}\log \frac{q(x)}{p(x)}-\log\frac{q(x,y)}{q(x)}+\log 2\right] \\
&\Rightarrow&\min_G\mathrm{KL}(Q_{X,Y}\|P_{X,Y})-\frac{1}{2}\mathrm{KL}(Q_X\|P_X)+H_Q(Y|X)+\log 2.
\end{IEEEeqnarray}
In summary, AM-GAN with the original discriminator remained (compared in our experiments) can be considered to be minimizing an upper bound of $\mathrm{JS}(Q_X\|P_X)+\mathrm{KL}(Q_{X,Y}\|P_{X,Y})-\frac{1}{2}\mathrm{KL}(Q_X\|P_X)+H_Q(Y|X)+\log 2$.

\clearpage

\section{More Results}\label{sec:more}

\begin{figure*}[htbp]
\begin{center}
\subfigure[FID curves on CIFAR-10]{
\label{fig:c10_fid_curve}
\includegraphics[width=0.48\textwidth]{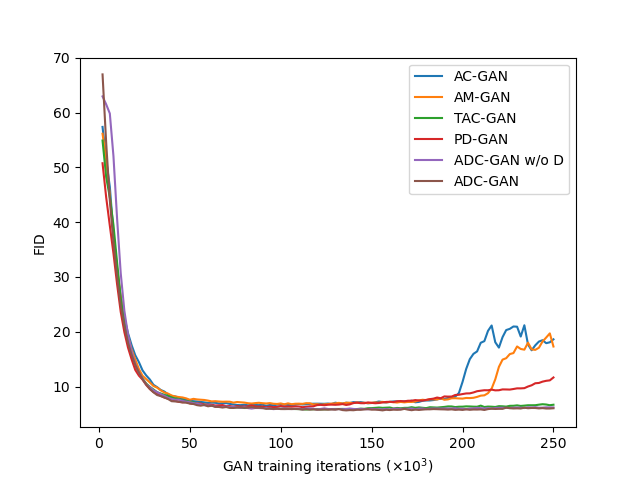}}
\subfigure[FID curves on Tiny-ImageNet]{
\label{fig:ti200_fid_curve}
\includegraphics[width=0.48\textwidth]{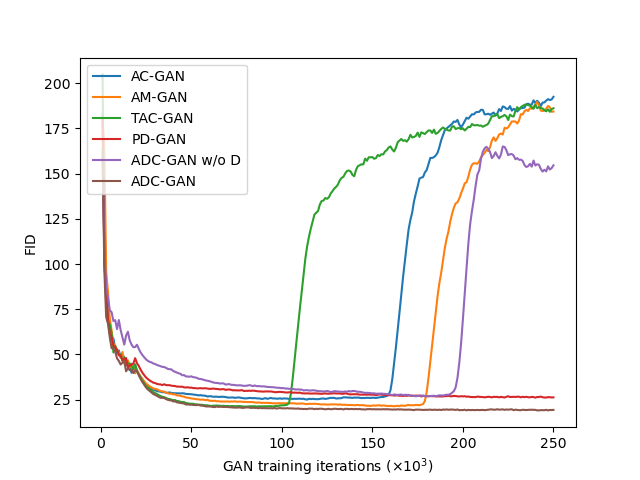}}
\caption{FID curves during GAN training on CIFAR-10 and Tiny-ImageNet, respectively.}
\label{fig:fid_curve}
\end{center}
\end{figure*}

\begin{figure*}[htbp]
\begin{center}
\subfigure[FID with different $\lambda'$ on CIFAR-10]{
\label{fig:c10_lambda}
\includegraphics[width=0.48\textwidth]{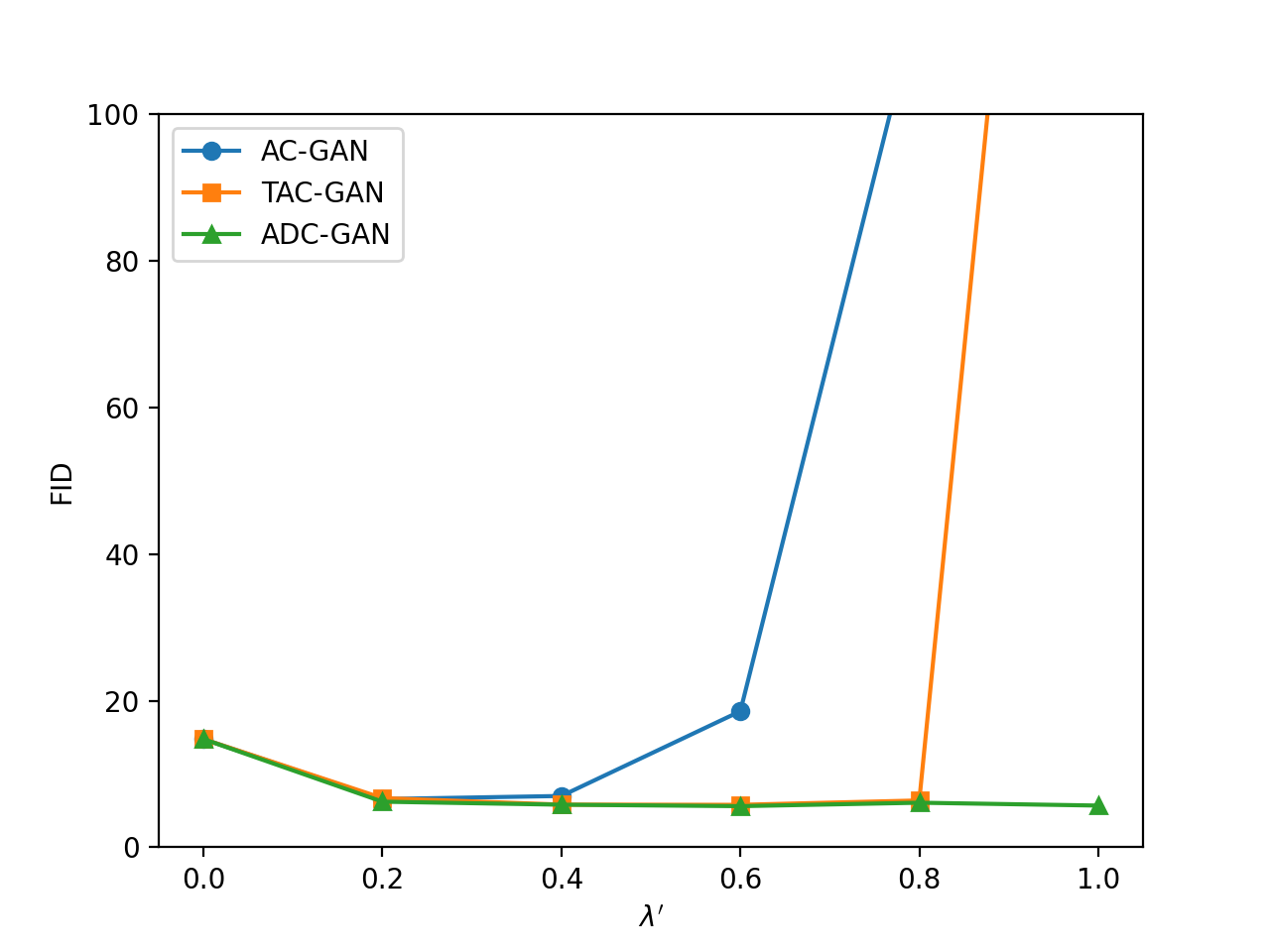}}
\subfigure[FID with different $\lambda'$ on Tiny-ImageNet]{
\label{fig:ti200_lambda}
\includegraphics[width=0.48\textwidth]{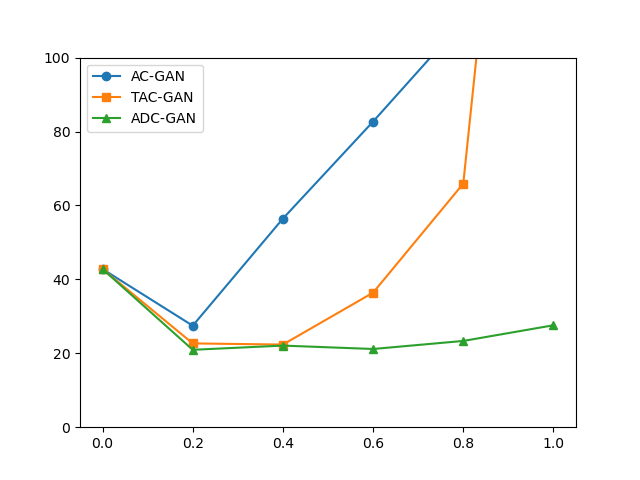}}
\caption{FID comparisons of classifier-based cGANs with different coefficient hyperparameters $\lambda'$ on CIFAR-10 and Tiny-ImageNet, respectively. The objective function in this experiment is $(1-\lambda')V(G,D)+\lambda'V_\mathrm{C}(G,C)$, where $V_\mathrm{C}(G,C)$ is the task between the generator and classifier.}
\label{fig:lambda}
\end{center}
\end{figure*}

\begin{table*}[t]
\caption{IS, FID, iFID, Precision, Recall, Density, and Coverage comparisons of competing methods under different GAN loss functions on CIFAR-10 and CIFAR-100, respectively. The best results are bold and the second best are underlined.}
\label{tbl:gan_loss_all}
\begin{center}
\begin{small}
\begin{sc}
\begin{tabular}{ccccccccc}
\toprule
CIFAR-10 & Methods & IS $\uparrow$ & FID $\downarrow$ & iFID $\downarrow$ & Precision $\uparrow$ & Recall $\uparrow$ & Density $\uparrow$ & Coverage $\uparrow$ \\
\midrule
\multirow{6}*{Non-saturation}
& PD-GAN & $9.68$ & $8.93$ & $81.30$ & $0.7581$ & $\underline{0.6718}$ & $\mathbf{1.0622}$ & $\underline{0.9208}$ \\
& AC-GAN & $\underline{9.74}$ & $9.21$ & $87.76$ & $0.7592$ & $0.6484$ & $1.0491$ & $0.9147$ \\
& TAC-GAN & $9.61$ & $9.31$ & $\underline{81.04}$ & $0.7349$ & $0.6717$ & $0.9575$ & $0.8990$ \\
& ADC-GAN & $\mathbf{9.87}$ & $\mathbf{8.47}$ & $\mathbf{77.69}$ & $0.7497$ & $\mathbf{0.6912}$ & $0.9968$ & $0.9202$  \\
& ContraGAN & $9.60$ & $8.87$ & $120.45$ & $\underline{0.7598}$ & $0.6595$ & $1.0025$ & $0.9061$ \\
& ReACGAN & $9.69$ & $\underline{8.51}$ & $113.23$ & $\mathbf{0.7648}$ & $0.6594$ & $\underline{1.0532}$ & $\mathbf{0.9242}$ \\
\midrule
\multirow{6}*{Least square}
& PD-GAN & $\mathbf{9.99}$ & $\underline{8.72}$ & $\underline{80.11}$ & $0.7525$ & $\underline{0.6771}$ & $\underline{1.0395}$ & $\underline{0.9182}$ \\
& AC-GAN & $5.01$ & $81.93$ & $176.24$ & $0.7389$ & $0.0037$ & $0.7484$ & $0.2129$ \\
& TAC-GAN & $9.41$ & $10.67$ & $80.92$ & $0.7386$ & $0.6520$ & $0.9159$ & $0.8657$ \\
& ADC-GAN & $9.89$ & $\mathbf{8.61}$ & $\mathbf{75.86}$ & $0.7405$ & $\mathbf{0.6919}$ & $0.9944$ & $\mathbf{0.9223}$ \\
& ContraGAN & $9.10$ & $12.93$ & $135.75$ & $\underline{0.7661}$ & $0.5761$ & $1.0236$ & $0.8262$ \\
& ReACGAN & $\underline{9.80}$ & $9.52$ & $125.83$ & $\mathbf{0.7772}$ & $0.5988$ & $\mathbf{1.1008}$ & $0.9138$ \\
\midrule
\multirow{6}*{W-GP}
& PD-GAN & $5.27$ & $75.24$ & $104.15$ & $0.5569$ & $0.2132$ & $0.3678$ & $0.2141$ \\
& AC-GAN & $8.88$ & $14.77$ & $88.02$ & $\mathbf{0.7015}$ & $0.6477$ & $0.7421$ & $0.7798$ \\
& TAC-GAN & $\underline{8.93}$ & $\underline{13.26}$ & $\underline{76.93}$ & $0.6847$ & $\underline{0.6705}$ & $\underline{0.7454}$ & $\underline{0.8127}$ \\
& ADC-GAN & $\mathbf{9.49}$ & $\mathbf{11.25}$ & $\mathbf{74.98}$ & $\underline{0.6996}$ & $\mathbf{0.7019}$ & $\mathbf{0.8182}$ & $\mathbf{0.8517}$ \\
& ContraGAN & $6.38$ & $51.43$ & $137.17$ & $0.5640$ & $0.3995$ & $0.4040$ & $0.2931$ \\
& ReACGAN & $6.60$ & $44.62$ & $117.25$ & $0.5813$ & $0.4333$ & $0.4559$ & $0.3287$ \\
\midrule
\multirow{6}*{Hinge}
& PD-GAN & $9.79$ & $\underline{8.45}$ & $79.40$ & $0.7464$ & $\underline{0.6853}$ & $1.0083$ & $0.9158$ \\
& AC-GAN & $\mathbf{9.96}$ & $8.97$ & $88.40$ & $\mathbf{0.7681}$ & $0.6523$ & $\underline{1.0250}$ & $\underline{0.9168}$ \\
& TAC-GAN & $9.78$ & $8.80$ & $81.30$ & $0.7446$ & $0.6749$ & $1.0026$ & $0.9103$ \\
& ADC-GAN & $9.63$ & $\mathbf{8.42}$ & $\mathbf{75.50}$ & $0.7447$ & $\mathbf{0.6882}$ & $0.9854$ & $\mathbf{0.9193}$ \\
& ContraGAN & $9.63$ & $8.89$ & $85.39$ & $0.7582$ & $0.6538$ & $\mathbf{1.0411}$ & $0.9098$ \\
& ReACGAN & $\underline{9.83}$ & $8.84$ & $\underline{78.07}$ & $\underline{0.7623}$ & $0.6675$ & $1.0003$ & $0.9158$ \\
\midrule
CIFAR-100 & Methods & IS $\uparrow$ & FID $\downarrow$ & iFID $\downarrow$ & Precision $\uparrow$ & Recall $\uparrow$ & Density $\uparrow$ & Coverage $\uparrow$ \\
\midrule
\multirow{6}*{Non-saturation}
& PD-GAN & $11.48$ & $\underline{11.59}$ & $\underline{105.38}$ & $0.7337$ & $\underline{0.6804}$ & $\underline{0.8646}$ & $\underline{0.8513}$ \\
& AC-GAN & $7.98$ & $49.46$ & $207.56$ & $0.7322$ & $0.0793$ & $0.6225$ & $0.4112$ \\
& TAC-GAN & $11.34$ & $14.47$ & $131.90$ & $\underline{0.7429}$ & $0.6077$ & $0.8324$ & $0.7887$ \\
& ADC-GAN & $\mathbf{11.88}$ & $\mathbf{11.07}$ & $\mathbf{104.21}$ & $0.7379$ & $\mathbf{0.6972}$ & $0.8521$ & $\mathbf{0.8609}$ \\
& ContraGAN & $11.15$ & $13.54$ & $146.86$ & $0.7390$	& $0.6155$ & $0.8481$ & $0.7729$ \\
& ReACGAN & $\underline{11.79}$ & $13.72$ & $125.21$ & $\mathbf{0.7541}$ & $0.5861$ & $\mathbf{0.8695}$ & $0.8005$ \\
\midrule
\multirow{6}*{Least square}
& PD-GAN & $11.32$ & $\underline{12.19}$ & $\mathbf{101.92}$ & $0.7263$ & $\underline{0.6903}$ & $0.8318$ & $\underline{0.8471}$ \\
& AC-GAN & $4.93$ & $87.70$ & $252.85$ & $0.7087$ & $0.0007$ & $0.5836$ & $0.2220$ \\
& TAC-GAN & $7.27$ & $49.08$ & $162.58$ & $0.7427$ & $0.2114$ & $0.7210$ & $0.4438$ \\
& ADC-GAN & $11.56$ & $\mathbf{11.85}$ & $\underline{103.06}$ & $0.7334$ & $\mathbf{0.6949}$ & $0.8145$ & $\mathbf{0.8526}$ \\
& ContraGAN & $\underline{12.59}$ & $15.62$ & $122.71$ & $\mathbf{0.7866}$	& $0.4642$ & $\underline{1.0109}$ & $0.7863$ \\
& ReACGAN & $\mathbf{12.90}$ & $15.09$ & $164.93$ & $\underline{0.7827}$ & $0.4672$ & $\mathbf{1.0454}$ & $0.8282$ \\
\midrule
\multirow{6}*{W-GP}
& PD-GAN & $5.66$ & $69.48$ & $-$ & $0.5976$ & $0.1603$ & $0.4310$ & $0.2649$ \\
& AC-GAN & $10.97$ & $19.30$ & $148.40$ & $0.6880$ & $0.5444$ & $0.6770$ & $0.7242$ \\
& TAC-GAN & $\mathbf{11.04}$ & $\underline{15.56}$ & $\underline{121.23}$ & $\underline{0.7023}$ & $\underline{0.6474}$ & $\underline{0.7048}$ & $\underline{0.7535}$ \\
& ADC-GAN & $\underline{11.01}$ & $\mathbf{14.02}$ & $\mathbf{101.14}$ & $\mathbf{0.7058}$ & $\mathbf{0.6804}$ & $\mathbf{0.7549}$ & $\mathbf{0.7956}$ \\
& ContraGAN & $6.72$ & $49.77$ & $147.22$ & $0.6498$ &	$0.2834$ & $0.5827$ & $0.3549$ \\
& ReACGAN & $6.67$ & $47.74$ & $150.7$ & $0.6188$ & $0.3104$ & $0.4806$ & $0.3396$ \\
\midrule
\multirow{6}*{Hinge}
& PD-GAN & $11.76$ & $\underline{10.96}$ & $\underline{108.08}$ & $0.7436$ & $\underline{0.6812}$ & $0.8790$ & $\underline{0.8609}$ \\
& AC-GAN & $11.66$ & $21.65$ & $168.87$ & $\mathbf{0.7577}$ & $0.3649$ & $0.8297$ & $0.7225$ \\
& TAC-GAN & $\mathbf{12.07}$ & $12.56$ & $134.75$ & $\underline{0.7572}$ & $0.6020$ & $\underline{0.8957}$ & $0.8400$ \\
& ADC-GAN & $\underline{11.82}$ & $\mathbf{10.73}$ & $\mathbf{103.78}$ & $0.7387$ & $\mathbf{0.7023}$ & $0.8721$ & $\mathbf{0.8707}$ \\
& ContraGAN & $10.08$ & $13.22$ & $128.50$ & $0.7372$ & $0.6251$ & $0.8356$ & $0.7790$ \\
& ReACGAN & $11.80$ & $12.52$ & $140.47$ & $0.7510$ & $0.5982$ & $\mathbf{0.9300}$ & $0.8327$ \\
\bottomrule
\end{tabular}
\end{sc}
\end{small}
\end{center}
\end{table*}

\end{document}